\def\I{{\mathbb I}}
\def\P{{\mathbb P}}
\def\pr{\mathbb{P}}
\def\L{{\mathcal L}}
\def\X{{\mathbf{X}}}
\def\x{{\mathbf{x}}}
\def\E{{\mathbb E}}
\def\T{{\mathcal T}}
\def\H{{\mathbf{H}}}
\let\hat\widehat
\newtheorem{thm}{Theorem}
\newtheorem{Lemma}{Lemma}
\newtheorem{Definition}{Definition}
\newtheorem{Remark}{Remark}
\newcommand{\Ncal}{\mathcal N}
\newcommand{\Xbf}{\mathbf X}
\newcommand{\xbf}{\mathbf x}
\newcommand{\codecomment}[1]{\textbf{\color{black}// #1}}
\def\I{{\mathbb I}}
\def\P{{\mathbb P}}
\def\pr{\mathbb{P}}
\def\H{{\mathbf{H}}}
\def\L{{\mathcal L}}
\def\X{{\mathbf{X}}}
\def\x{{\mathbf{x}}}
\def\E{{\mathbb E}}
\def\T{{\mathcal T}}
\let\hat\widehat
\newcommand{\btheta}{\boldsymbol{\theta}}
\newcommand{\bnu}{\boldsymbol{\nu}}
\definecolor{vermilion}{rgb}{0.89, 0.26, 0.2}
\newcommand{\add}[1]{{\color{black} #1}}
\def\train{{\text{train}}}
\def\target{{\text{target}}}
\theoremstyle{plain}
\theoremstyle{definition}
\theoremstyle{remark}
\icmltitlerunning{Classification under Nuisance Parameters and Generalized Label Shift}
\begin{document}

\twocolumn[
\icmltitle{Classification under Nuisance Parameters and Generalized Label Shift \\
in Likelihood-Free Inference}



\icmlsetsymbol{equal}{*}

\begin{icmlauthorlist}
\icmlauthor{Luca Masserano}{equal,cmustats,cmuml}
\icmlauthor{Alex Shen}{equal,cmustats}
\icmlauthor{Michele Doro}{unipd}
\icmlauthor{Tommaso Dorigo}{infn,lulea,usern}
\icmlauthor{Rafael Izbicki}{ufscar}
\icmlauthor{Ann B. Lee}{cmustats,cmuml}
\end{icmlauthorlist}

\icmlaffiliation{cmustats}{Department of Statistics and Data Science, Carnegie Mellon University, Pittsburgh, USA}
\icmlaffiliation{cmuml}{Machine Learning Department, Carnegie Mellon University, Pittsburgh, USA}
\icmlaffiliation{ufscar}{Department of Statistics, Universidade Federal de São Carlos, São Paulo, Brazil}
\icmlaffiliation{infn}{Istituto Nazionale di Fisica Nucleare, Sezione di Padova, Italy}
\icmlaffiliation{lulea}{Lulea Techniska Universitet, Lulea, Sweden}
\icmlaffiliation{usern}{Universal Scientific Education and Research Network, Italy}
\icmlaffiliation{unipd}{Department of Physics and Astronomy, Università di Padova, Padova, Italy}

\icmlcorrespondingauthor{Luca Masserano}{lmassera@andrew.cmu.edu}
\icmlcorrespondingauthor{Ann B. Lee}{annlee@andrew.cmu.edu}

\icmlkeywords{Classification, Label Shift, Latent Shift, Set-Valued Classifier, Simulation-Based Inference, SBI, Likelihood-Free Inference, Nuisance Parameters, Systematic Uncertainties, Hypothesis Testing, Biology, Physics, Science}

\vskip 0.3in
]



\printAffiliationsAndNotice{\icmlEqualContribution} 

\begin{abstract}
An open scientific challenge is how to classify events with reliable measures of uncertainty, when we have a mechanistic model of the data-generating process but the distribution over both labels and latent nuisance parameters is different between train and target data. We refer to this type of distributional shift as {\em generalized label shift} (GLS). 
Direct classification using observed data $\X$ as covariates leads to biased predictions and invalid uncertainty estimates of labels $Y$. We overcome these biases by proposing a new method for robust uncertainty quantification that casts classification as a hypothesis testing problem under nuisance parameters. The key idea is to estimate the classifier's receiver operating characteristic (ROC) across the entire nuisance parameter space, which allows us to devise cutoffs that are invariant under GLS. Our method effectively endows a pre-trained classifier with domain adaptation capabilities and returns valid prediction sets while maintaining high power. We demonstrate its performance on two challenging scientific problems in biology and astroparticle physics with data from realistic mechanistic models.
\end{abstract}
\section{Introduction}\label{sec:introduction}

\paragraph{Problem Set-up} 
Likelihood-free inference refers to settings where the likelihood function $\L(\x;\btheta)$ --- associated with a “theory” or model of the data-generating process --- is intractable, but one is able to simulate relatively large data sets $\T=\{(\btheta_1, \X_1), \ldots, (\btheta_B, \X_B)\} \sim p_\train(\btheta) \L(\x;\btheta)$. These mechanistic models (or simulators) implicitly define \add{the ``causal'' model $\btheta \rightarrow \X$} that encodes our knowledge of how internal parameters determine observable data, and are widely used in several domains of science.

While the likelihood $\L(\x;\btheta)$ stays the same under the assumed theory, the prior over parameters $p_\text{\train}(\btheta)$ is {\em chosen by design} and can be different from the true target distribution $p_{\target}(\btheta)$, thereby causing a potentially harmful bias  when inferring $\btheta$ given a new observation $\x_{\target}$. If the unknown parameter of interest is a categorical variable $Y \in \mathcal{Y}=\{0, 1, \ldots, K\}$ and the causal mechanistic model remains the same --- that is, $p_{\train}(\X \mid Y)=p_{\target}(\X \mid Y)$ --- the difference in the joint distribution of $(\btheta, \X)$ between train and target data is referred to as prior probability shift or label shift \citep{quinonero2008dataset, vaz2019quantification, polo2023unified, storkey2009training, fawcett2005response, moreno2012unifying}. We refer to this setting as {\em standard label shift} (SLS). 

In this paper, we consider a more general setup that reflects a richer mechanistic model: \add{$\btheta = (Y, \bnu) \rightarrow \X$}, where $\bnu \in \Ncal$ are \add{continuous or discrete} nuisance parameters that are not of direct interest but critically influence the data-generating process. These nuisance parameters are available at the training stage, but are {\em not} observed at the inference stage when estimating $Y$ from $\x_{\target}$. We refer to a shift that simultaneously affects $Y$ and $\bnu$ as {\em generalized label shift} (GLS), and assume that $p_{\train}(\X \mid Y,\bnu)=p_{\target}(\X \mid Y,\bnu)$. \add{Within this setting, our goal is not just to do binary classification per se (that is, providing a $0$ versus $1$ response), but rather to do trustworthy uncertainty quantification for the classification output, even under GLS.}
 
\paragraph{Scientific Motivation}
Nuisance parameters can be seen as a way of accounting for model misspecifications. Statistical models are indeed rarely accurate in capturing the complexity of physical phenomena. To account for “known unknowns”, such as calibration errors in the measuring device or inaccuracies and approximations in the theory, scientists usually resort to enlarging the mechanistic model with additional parameters that are not of direct relevance, but yet have to be considered during inference in order to make reliable statements about the parameters of interest. These additional parameters are commonly referred to as nuisance parameters \cite{kitching2009cosmological,dorigo2020dealing, pouget2013probabilistic, hepmllivingreview}: \add{they are necessary to achieve more faithful models of reality, but make correct inference much more challenging.}

\begin{figure}[b!]
    \centering
    \includegraphics[width=\columnwidth]{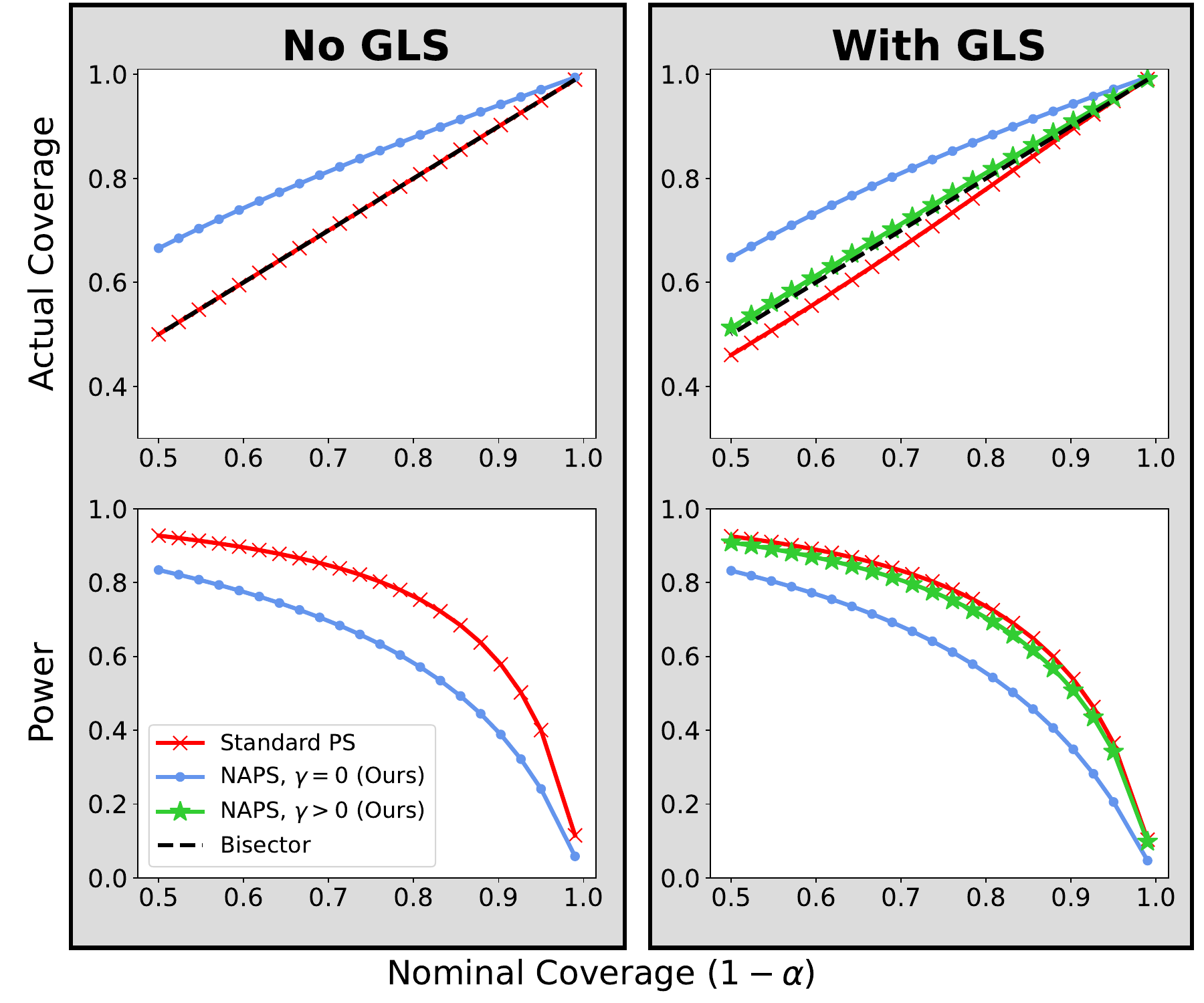}
    \caption{\textbf{Synthetic Example.} {\em Left (no GLS):} Standard prediction sets $R_\alpha(\x)$ (\textcolor{red}{red}) guarantee marginal coverage at the nominal level. Nuisance-aware prediction sets (NAPS $\gamma = 0$; \textcolor{NavyBlue}{blue}) are also marginally valid, but the ``universality'' of  conditional validity across the entire nuisance parameter space comes at the price of more conservative prediction sets and lower power. {\em Right (with GLS):} Standard prediction sets are no longer valid and undercover for all $\alpha$ levels (red curve is below the black bisector), while NAPS are still valid. Furthermore, we can increase power while maintaining validity (NAPS $\gamma > 0$; \textcolor{LimeGreen}{green}) by constructing $(1-\gamma)$ confidence sets of the nuisance parameter $\nu$ and deriving less conservative cutoffs given an observation. Here $\gamma = \alpha \times 0.01$.}
    \label{fig:synth2}
\end{figure}

\paragraph{Statistical Challenges} We introduce a simplified example (see Section~\ref{sec:synthetic_example} for details) to illustrate the challenges of classification under the presence of nuisance parameters. Suppose $Y=1$ represents a class with cases of interest (e.g., the presence of a medical condition) and $Y=0$ a class with cases of no interest. We have good knowledge of the probability density function (PDF) of $Y=1$, $f_1(x)$, but the shape of the distribution of $Y=0$ is largely unknown. To accommodate different scenarios, we resort to a nuisance-parameterized PDF $f_0(x; \nu)$. Our goal is to discriminate between negative $Y=0$ and positive $Y=1$ cases based on potentially high-dimensional data $\x \in \mathcal{X}$ and to provide valid measures of uncertainties on the true label $Y$ under the presence of a nuisance parameter $\nu$. However, directly classifying $\x_{\target}$ based on $\P_{\train}(Y=1 \mid \X)$ and a cutoff $C$ derived from $\mathcal{T}= \{(Y_i, \X_i)\}_{i=1}^{B}$ would lead to invalid uncertainty quantification. Indeed, under GLS (or even SLS), standard prediction sets (defined as in, e.g., Equation~\ref{eq:standardPS}) do not guarantee marginal validity:
\begin{equation*}
    \P_{\target}(Y \in R_\alpha(\X)) \geq 1-\alpha,
\end{equation*}
where $Y$ and $\X$ are random and $\alpha \in [0,1]$ is a pre-specified miscoverage level. Various solutions have been proposed for the SLS setting (see references in Section~\ref{sec:related_work}), whereas GLS is still a largely unexplored area in the machine learning literature. The key open challenge is to design general-purpose inference algorithms that can guarantee {\em valid} measures of uncertainty for all $Y$ and $\bnu$ while providing high constraining power on $Y$ (that is, smaller prediction sets). 

Returning to our simplified experiment, Figure~\ref{fig:synth2} (top left) illustrates how standard prediction sets $R_\alpha(\x)$ are marginally valid when the train and target distributions are the same, while under GLS prediction sets are no longer valid even marginally (top right). Our nuisance-aware prediction sets (NAPS, $\gamma = 0$ in Figure~\ref{fig:synth2}), on the other hand, are valid in both settings. In addition, we can increase the constraining power (NAPS, $\gamma>0$) once we observe data without the need to re-train the classifier, effectively endowing our method with domain adaptation capabilities. 

\paragraph{Approach and Contributions} We categorize our main contributions as follows: 

\textit{i) {\bf TPR and FPR across $\Ncal$}. } By casting classification under GLS as a hypothesis testing problem with nuisance parameters, we propose a method to estimate the TPR and FPR curves across the nuisance parameter space via monotone regression. This allows us to compute the entire receiver-operating-characteristic (ROC) of the classifier for all $\bnu \in \Ncal$ (Section \ref{sec:RejectionAndROC} and Algorithm \ref{alg:power_function}). 

\textit{ii) }{\bf Nuisance-aware prediction sets (NAPS). } Rather than providing a point prediction based on an estimate of $\P_{\train}(Y=1 \mid \X)$, we derive selection criteria that are valid under GLS and construct a {\em set-valued classifier} $\H: \x \mapsto \{\emptyset, 0, 1, \{0,1\}\}$ which guarantees that the true label is included in the set with probability at least $(1-\alpha)$, regardless of the true class $y$ and of the value of the nuisance parameters $\bnu$. That is, the prediction sets $\mathbf{H}_\alpha(\X)$ guarantee conditional validity under GLS (Theorem \ref{thm:nacs_coverage}):
\begin{equation}\label{eq:cond_coverage}
    \P_\target(Y \in \H_\alpha(\X) \mid y, \bnu) \geq 1-\alpha,  \ \forall y \in \mathcal{Y}, \ \bnu \in \mathcal{N}.
\end{equation}
Standard point classifiers (e.g., the Bayes classifier; Appendix~\ref{app:bayes_clf}) and prediction sets based on $\P_\train(Y=1 \mid \X)$ are not conditionally valid across the nuisance parameter space, and hence are also not valid marginally under GLS. On the other hand, our algorithm returns valid NAPS for all levels $\alpha \in (0,1)$ simultaneously given any new observation $\x_{\target}$ without having to retrain the classifier. This also yields marginal validity under GLS (Theorem \ref{thm:nacs_coverage}). Our results do \textit{not} rely on asymptotic theory with the number of observations $n\rightarrow \infty$.  We only assume to have a sufficient number of simulations $B$ to train and calibrate the classifier.

\textit{iii) }{\bf NAPS with higher power. } We show how one can further increase power while maintaining validity by constraining nuisance parameters given an observed $\x_{\target}$ through $(1-\gamma)$ confidence sets of the nuisance parameters $\bnu$, where $\gamma$ is a small pre-defined error level. This effectively allows to derive data-dependent cutoffs that decrease the average size of prediction sets given a specific observation.

We demonstrate our method using data from two high-fidelity scientific simulators: \texttt{scDesign3} \citep{song2023scdesign3} which generates realistic single-cell RNA-sequencing data, and \texttt{CORSIKA} \citep{heck1998corsika} which models the interactions of primary cosmic rays with the Earth's atmosphere. A flexible implementation of NAPS is available at \href{https://github.com/lee-group-cmu/lf2i}{https://github.com/lee-group-cmu/lf2i}.
\section{Related Work} \label{sec:related_work}

To the best of our knowledge, this is the first work that estimates ROC curves across the entire parameter space $\Theta=\mathcal{Y} \times \mathcal{N}$. To construct {\em frequentist confidence sets}, we base our results directly on the class probability $\P_\train(Y=1 \mid \X)$, rather than using a surrogate likelihood or likelihood ratio (see for example references in \citealt{cranmer2020frontier}). The idea of \textit{improving power} of  NAPS with $\gamma>0$ is similar to \citet{berger1994p}, and close in spirit to likelihood profiling, with the key difference that profiling does not guarantee validity (even for a large number of simulations $B$ and under no GLS), and also requires an approximation of the likelihood and the maximum likelihood estimate of $\bnu$. The ROC {\em calibration} framework of Section \ref{sec:RejectionAndROC} is related to \citet{zhao2021diagnostics} and \citet{dey2022calibrated}, which use monotone regression to estimate the CDF of probability integral transforms for calibrating posterior probabilities, but not for constructing valid prediction sets under GLS. When the prior distribution over $y$ in the target data is known,  $\P_\train(Y=1 \mid \X)$ can be easily recalibrated to match  $\P_\target(Y=1 \mid \X)$ under SLS \citep{saerens2002adjusting, lipton2018detecting}. However, this is not possible under GLS  since $\bnu$ is unknown at inference time. Moreover, our approach does not assume such a known prior. The construction of {\em set-valued classifiers} of Section~\ref{sec:set_classifiers} is inspired by \citet{sadinle2019least, dalmasso2021likelihood, masserano2023simulator}. There are also connections to {\em conformal prediction}: Conformal methods are widely used because they ensure prediction sets with marginal coverage when data are exchangeable \citep{papadopoulos2002inductive, Vovk2005, Lei2018}. However, conformal methods need adjustments under distributional shift when data are no longer exchangeable. Such adjustments need to be tailored for the type of shift at hand \citep{tibshirani2019conformal}. For instance, label shift can be addressed through label-conditional conformal prediction \citep{vovk2014conformal, vovk2016criteria, sadinle2019least},
which guarantees coverage conditional on the label $y$ \citep[Section~2.2]{podkopaev2021distribution} under SLS, but not under the presence of nuisance parameters and GLS. Finally, our work directly addresses the existing gap in methods for constructing {\em reliable simulator-based inference} algorithms with valid uncertainty quantification guarantees \citep{hermans2021averting}. Our work is also inspired by the vast literature in high-energy physics on hypothesis testing and {\em nuisance-parameterized} machine-learning methods \citep{Feldman1998UnifyingApproach, cousins2006treatment, Sen2009NuisanceParameters, chuang1998hybridresampling, louppe2017learning, cowan2011asymptotic}, which also includes the so-called “mining gold” idea of leveraging hidden information on latent variables in an all-knowing simulator \citep{brehmer2020mining}.

\section{Methodology}\label{sec:method}

For simplicity, we will restrict our discussion to $Y \in \{0,1\}$.

\subsection{Classification as  Hypothesis Testing}\label{sec:classification_hp}

We reformulate the binary classification problem as a composite-versus-composite hypothesis test: 
\begin{equation} \label{eq:gen_test}
    H_{0,y}: \btheta \in \Theta_0   \ \ \text{versus} \ \ H_{1, y}: \btheta \in \Theta_1,
\end{equation}
where $\Theta_0= \{y\} \times \mathcal{N}$, $\Theta_1= \{y\}^c \times \mathcal{N}$.
We define
\begin{equation} \label{eq:BF_test_statistic}
\tau_{y}(\x) =
\frac{\P_\train(Y=y \mid \x) \; \P_\train(Y \neq y)}{\P_\train(Y \neq y \mid \x) \; \P_\train(Y=y)}
\end{equation}
as our test statistic, which is equivalent to the Bayes factor for the test in Equation~\ref{eq:gen_test}; see Appendix \ref{sec:app_bayes_factor} for a derivation.  
Alternatively, one can define the test statistic as the probabilistic classifier $\P_\train(Y=y|\x)$ itself. Both quantities (which are related via a monotonic transformation) can be estimated directly from a {\em pre-trained} classifier based on $\mathcal{T}_B$.  That is, there is no need for an extra step to, e.g., learn the likelihood function $\mathcal{L}(\x;Y, \bnu)$ or the associated likelihood ratio statistic from simulated data as done in   \citet{cranmer2020frontier}, \citet{rizvi2023learning}, and references therein.

We denote  the estimate of $\tau_{y}$ by $\widehat{\tau}_{y}$ and reject the null $H_{0,y}$ for small values of $\widehat{\tau}_{y}$. For example, if the null represents $y=0$, then a ``positive'' case ($y=1$) in binary classification would correspond to small values of $\widehat{\tau}_{0}$, or equivalently, large values of the probabilistic classifier $\widehat{\P}_\train(Y=1 \mid \x)=1-\widehat{\P}_\train(Y=0 \mid \x)$.
In this work, we define cutoffs for $\widehat{\tau}_{y}$ so that prediction sets are approximately valid under nuisance parameters and GLS.

\subsection{The Rejection Probability Across the Entire Parameter Space}\label{sec:RejectionAndROC}

To choose the optimal cutoff to reject $H_{0,y}$ and construct valid prediction sets, we need to know how the classifier performs for different values of the nuisance parameters $\bnu$. The first step is to compute the following quantity:

\begin{Definition}[Rejection probability] 
\label{def:reject_prob} 
Let $\lambda$ be any test statistic, e.g, the estimated Bayes factor, $\lambda = \widehat{\tau}_{y}$. The rejection probability of $\lambda$ is defined as
\begin{align}
\label{eq:reject_prob}
W_{\lambda}(C; y, \bnu) := \pr_\target \left(  \lambda(\X) \leq C |y,\bnu\right),
\end{align}
where $y \in \{0,1\}$, $\bnu \in \mathcal{N}$, and   $C \in \mathbb{R}$.
\end{Definition}

For fixed $\bnu$ and null $H_{0,0}:Y=0$, the receiver operating characteristic (ROC) relates the true positive rate 
$$\texttt{TPR}(C; \bnu):=W_{\widehat  \tau_{0}}(C; 1, \bnu)$$
to the false positive rate
$$\texttt{FPR}(C; \bnu):= W_{\widehat \tau_{0}}(C; 0, \bnu),$$
while varying the cutoff $C$.
Figure~\ref{fig:cosmic_ROC} shows examples of some ROC curves at different values of $\bnu$ when the null represents the negative class $y=0$, for the setting of Section~\ref{sec:cosmic_rays}.

A key insight behind our method is that the rejection probability (Equation~\ref{eq:reject_prob}) is invariant under GLS even if estimated from $p_{\train}$; in other words, it is always the same for train and target data (Lemma \ref{lemma:reject_prob_invariance}). As a result, our ROC curves reliably measure the performance of the classifier under nuisance parameters. In practice, we can estimate $W_{\lambda}(C; y, \bnu)$ for all $y$ and $\bnu$ simultaneously using regression with a monotonic constraint in $C$. The whole procedure is amortized with respect to the target data, meaning that both the base classifier and the rejection probability are estimated only once, after which they can be evaluated on an arbitrary number of observations.

\subsection{Selecting the Optimal Cutoff under GLS}\label{sec:ROC_cutoffs}

Once we know the classifier's rejection probability function, we can apply it in various ways. All our choices are robust against GLS.

\paragraph{Controlling FPR or TPR } Based on $W_{\lambda}(C; y, \bnu)$, we can find the cutoff $C$ for a new test point that either controls type-I error (FPR), or guarantees a minimum recall (TPR), or maximizes some other metric of choice that depends on both FPR and TPR. For example, FPR control at some pre-specified level $\alpha \in [0,1]$ and $\bnu_0 \in \mathcal{N}$ implies $C_{\alpha} = \texttt{FPR}^{-1}(\alpha; \bnu_0),$  and TPR control at some minimum recall $\alpha$ implies $\widetilde{C}_{\alpha} = \texttt{TPR}^{-1}(\alpha; \bnu_0)$. To control FPR or TPR {\em uniformly} over $\bnu$, one can instead choose $C_{\alpha} = \inf_{\bnu \in \mathcal{N}} \texttt{FPR}^{-1}(\alpha; \bnu),$ and $\widetilde{C}_{\alpha} = \sup_{\bnu \in \mathcal{N}} \texttt{TPR}^{-1}(\alpha; \bnu)$, respectively. Although robust under GLS, such cutoffs can be overly conservative.

\paragraph{Controlling FPR or TPR, but with more power} An alternative approach, which is still valid for any $\bnu$ and can increase power, is to restrict the search over nuisance parameters to a smaller region of $\mathcal{N}$. For this approach, we first construct a confidence set $S(\x;\gamma)$ for $\bnu$ and fixed $y \in \{0,1\}$ at a pre-specified $(1-\gamma)$ level (Definition~\ref{def:CS_nuisance}). This allows to choose a data-dependent cutoff such that 
 \begin{equation*}
           C_\alpha^*(\x) = \inf_{\bnu \in S(\x;\gamma)} \{ \texttt{FPR}^{-1}(\beta;\bnu)\},
\end{equation*}
where $\beta=\alpha-\gamma$, where the minimization is over the restricted set $S(\x;\gamma) \subseteq \mathcal{N}$. In practice, $S(\x;\gamma)$ can be either obtained from \add{auxiliary} measurements that are available at inference time, or from a separate pre-trained model that returns valid confidence sets on $\bnu$ from data $\x$. Lemma~\ref{lemma:NA_cutoff} demonstrates that this cutoff guarantees a maximum type-I error equal to $\alpha$ (FPR control) for any $\bnu \in \mathcal{N}$. Similarly, for TPR control, choosing $\widetilde{C}_{\alpha}^*(\x) = \sup_{\bnu \in S(\x;\gamma)} \texttt{TPR}^{-1}(\beta; \bnu)$ with $\beta=\alpha+\gamma$ guarantees a minimum recall of at least $\alpha$. The special case of $\gamma=0$ (and $\beta=\alpha$) corresponds to $S(\x;\gamma) = \mathcal{N}$; that is, no constraints on the nuisance parameters. Finally, note that hybrid cut-offs $\texttt{FPR}^{-1}(\beta; \hat{\bnu})$ and $\texttt{TPR}^{-1}(\beta; \hat{\bnu})$ based on a {\em point prediction} $\hat{\bnu}(\x)$ of the nuisance parameters (such as the posterior mean) would not lead to valid uncertainty quantification under GLS (see Figure~\ref{fig:synth_covextra} in Appendix). 

\subsection{Constructing Robust Set-Valued Classifiers}\label{sec:set_classifiers}

\begin{algorithm}[t!]
    \caption{\texttt{Nuisance-aware prediction sets}}\label{alg:naps}
    
    \textbf{Input: }{\small training set $\mathcal{T}= \{(Y_i, \Xbf_i)\}_{i=1}^{B}$; calibration set \add{$\mathcal{T}^\prime=\{(Y^\prime_i, \bnu^\prime_i, \X^\prime_i)\}_{i=1}^{B^\prime}$}; observation $\xbf$; test statistic $\lambda = \tau_y$; miscoverage levels $\alpha \in [0, 1]$ and $\gamma \in [0, \alpha]$.\\}
    \textbf{Output: }{\small Prediction set $H_\alpha(\xbf)$ such that Equation~\ref{eq:cond_coverage} holds.\\}		
    \begin{algorithmic}[1]
        \STATE \codecomment{Training}
        \STATE Estimate $\P_\train(Y=y \mid \Xbf)$ via a probabilistic classifier
        \STATE \codecomment{Calibration}
        \STATE Estimate $W_{\tau_{y}}(C; y, \bnu) := \pr_\target \left(  \tau_{y}(\X) \leq C \mid y,\bnu\right)$ as detailed in Algorithm~\ref{alg:power_function} by
        \begin{enumerate}[i.]
            \item \add{Computing the test statistic $\hat{\tau}_{y}(\x)$ as in Equation~\ref{eq:BF_test_statistic} for all $\X \in \mathcal{T}^\prime$;}
            \item Constructing the augmented calibration set $\mathcal{T}^{\prime\prime}$;
            \item Estimating \add{the rejection probability function} $W_{\hat{\tau}_{y}}(C; y, \bnu)$ from $\mathcal{T}^{\prime\prime}$ via monotone regression.
        \end{enumerate}
        \STATE \codecomment{Inference}
        \FOR{$y \in \{0, 1\}$}
            \STATE Compute $\hat{\tau}_{y}(\xbf)$ as in Equation~\ref{eq:BF_test_statistic}
            \IF{$\gamma = 0$}
                \STATE $C_{\alpha, y}^*(\x) \gets \inf_{\bnu \in \mathcal{N}} \{ \hat{W}_{\hat{\tau}_{y}}^{-1}(\alpha;y,\bnu)\}$
            \ELSE 
                \STATE \add{Constrain nuisance parameters by constructing a} level-$\gamma$ confidence set $S_y(\x;\gamma)$ for $\bnu$
                \STATE $C_{\alpha, y}^*(\x) \gets \inf_{\bnu \in S_y(\x;\gamma)} \{ \hat{W}_{\hat{\tau}_{y}}^{-1}(\alpha-\gamma;y,\bnu)\}$
            \ENDIF
        \ENDFOR
        \STATE $\H(\x;\alpha) \gets \left\{ y \in \{0,1\} \mid \widehat{\tau}_{y}(\x) > C_{\alpha,y}^*(\x) \right\}$
        \STATE \textbf{return} Prediction set $\H(\x;\alpha)$ for $Y$
    \end{algorithmic}
\end{algorithm}

Rather than just returning a single label $0/1$ for each observation $\x$ like the standard Bayes classifier (Appendix~\ref{app:bayes_clf}), our method yields prediction sets from a set-valued classifier. 

\begin{Definition}[Nuisance-aware prediction set]\label{def:NAPS}
A nuisance-aware prediction set (NAPS) is the set returned from a  set-valued classifier $\H: \x \mapsto \{\emptyset, 0, 1, \{0,1\}\}$ with 
\begin{equation}
\label{eq:NAC}
        \H(\x;\alpha) = \left\{ y \in \{0,1\} \mid \widehat{\tau}_{y}(\x) > C_{\alpha,y}^*(\x) \right\},
\end{equation} 
where
 \begin{equation}
    C_{\alpha,y}^*(\x) = \inf_{\bnu \in S_{y}(\x;\gamma)} \{ W^{-1}_{\widehat \tau_{y}}(\beta; y, \bnu)\},
\end{equation}
is the rejection cutoff, $\beta=\alpha-\gamma$ 
and $S_y(\x;\gamma)$ is a $(1-\gamma)$ confidence set for $\bnu$ defined by Equation~\ref{eq:CS_nuisance}.
\end{Definition}

This classifier guarantees user-defined levels of coverage $1-\alpha$ (the probability that the true label is included in the set), no matter what the true class $y$ and the nuisance parameters $\bnu$ are (Theorem~\ref{thm:nacs_coverage}). \add{The resulting prediction sets contain all labels that were not rejected by the corresponding hypothesis test. Ambiguous sets can arise in two cases: \textit{i)} When both null hypotheses are rejected, we obtain an empty set. However, empty sets only arise at very low confidence levels (high values of $\alpha$), which is typically not considered an interesting regime; \textit{ii)} When both null hypotheses are accepted, we obtain a prediction set that includes both 0 and 1. This latter type of ambiguity reflects the uncertainty of the classifier, which typically grows at higher confidence levels (low values of $\alpha$). A low-quality classifier will often report an “I-don’t-know answer” for ambiguous instances if forced to guarantee a certain confidence level, rather than returning a $0/1$ answer that has a high chance of being incorrect.}

While $\gamma=0$ can be the default choice \add{for NAPS}, \add{choosing a small} $\gamma>0$ \add{often leads} to higher power (see Section~\ref{sec:experiments}). Finally, note that while our set-valued classifier targets conditional coverage under GLS according to Equation~\ref{eq:cond_coverage}, as a by-product we also achieve prediction sets with marginal coverage under GLS (see Theorem \ref{thm:nacs_coverage}).

\add{Algorithm~\ref{alg:naps} includes a step-by-step description of the entire procedure for constructing nuisance-aware prediction sets.}
\section{Theoretical Results}\label{sec:theory}

Proofs for this section can be found in Appendix \ref{sec:poofs}.

\subsection{Validity and Robustness to GLS}

\begin{Lemma}[Invariance of the Rejection Probability to GLS]
\label{lemma:reject_prob_invariance}
Under GLS, the rejection probability (Definition \ref{def:reject_prob}) of any test statistic $\lambda$ 
is invariant to GLS, that is
\begin{equation*}
    \begin{split}
        W_{\lambda}(C; y, \bnu) &= \pr_\target  \left(  \lambda(\X) \leq C \mid y,\bnu \right)\\ 
        &= \pr_\train \left(  \lambda(\X) \leq C \mid y,\bnu\right).
    \end{split}
\end{equation*}
\end{Lemma}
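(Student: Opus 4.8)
The plan is to reduce the claim to the single defining property of generalized label shift, namely that the conditional law of the observable given the full parameter is unchanged between the two regimes: $p_{\train}(\x \mid y, \bnu) = p_{\target}(\x \mid y, \bnu)$ for every $y \in \{0,1\}$ and $\bnu \in \Ncal$. The rejection probability in Definition~\ref{def:reject_prob} is, by its very construction, a functional of this conditional law and of the fixed map $\lambda$ only; it therefore cannot depend on the marginal over $(y,\bnu)$, and in particular cannot change when we pass from $p_{\train}$ to $p_{\target}$.

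Concretely, the first step is to pin down the status of the test statistic. Although $\lambda = \widehat{\tau}_y$ is learned, it is estimated from the simulation table $\mathcal{T}_B \sim p_{\train}$, which is independent of any future target observation $\x_{\target}$; conditioning on $\mathcal{T}_B$ (equivalently, on the trained classifier), $\lambda : \Xcal \to \RR$ is a fixed measurable function, so the set $A_C := \{\x \in \Xcal : \lambda(\x) \leq C\}$ is a fixed measurable subset of $\Xcal$ that does not depend on whether $\X$ is drawn under the train or target distribution. Then, for any $y \in \{0,1\}$ and $\bnu \in \Ncal$, I would simply write
\begin{equation*}
    W_\lambda(C; y, \bnu) = \pr_{\target}(\X \in A_C \mid y, \bnu) = \int_{A_C} p_{\target}(\x \mid y, \bnu)\, d\x .
\end{equation*}

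The second step is to substitute the GLS identity $p_{\target}(\x \mid y, \bnu) = p_{\train}(\x \mid y, \bnu)$ inside the integral, which immediately gives
\begin{equation*}
    \int_{A_C} p_{\target}(\x \mid y, \bnu)\, d\x = \int_{A_C} p_{\train}(\x \mid y, \bnu)\, d\x = \pr_{\train}(\X \in A_C \mid y, \bnu) = \pr_{\train}(\lambda(\X) \leq C \mid y, \bnu),
\end{equation*}
which is exactly the asserted equality; the argument goes through verbatim for a general $\lambda$, and with counting measure in place of Lebesgue measure when $\Xcal$ is discrete.

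There is essentially no deep obstacle here — the proof is a one-line change of measure — and the only point that genuinely requires care is the status of $\lambda$ when it is data-dependent. The resolution, as noted above, is that the randomness of $\widehat{\tau}_y$ resides entirely in the training sample $\mathcal{T}_B$, which is independent of the target observation being conditioned on, so one may treat $\lambda$ as deterministic for the purpose of this computation; one also invokes the harmless regularity assumption that $\lambda$ is measurable so that $A_C$ is well defined. I would close by emphasizing the payoff: since every quantity the method calibrates downstream (the cutoffs of Section~\ref{sec:ROC_cutoffs}, the confidence sets $S_y(\x;\gamma)$, and hence the coverage statement of Theorem~\ref{thm:nacs_coverage}) is a function of $W_\lambda$ alone, this invariance is precisely what transports guarantees obtained under $p_{\train}$ to the GLS target setting.
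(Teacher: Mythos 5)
Your proof is correct and follows essentially the same route as the paper's: the paper's one-line argument simply observes that $W_{\lambda}(C; y, \bnu)$ depends only on the conditional law of $\X \mid y, \bnu$, which GLS leaves unchanged, and your change-of-measure computation is just that observation written out explicitly. The extra care you take with the measurability of $\lambda$ and its dependence on the training sample $\mathcal{T}_B$ is a reasonable elaboration but not a different idea.
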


\subsubsection{Nuisance-Aware Cutoffs}

\begin{Definition}[Confidence set for nuisance parameters] \label{def:CS_nuisance}

The random set $S_y(\x;\gamma)$ is a valid $(1-\gamma)$ level confidence set for $\bnu$ at fixed $y \in \{0,1\}$, if 
\begin{equation}
		 \P_\target \left(\bnu \in S_{y}(\X;\gamma) \mid y,\bnu \right) \geq 1-\gamma , \ \ \forall \bnu \in \mathcal{N} , \label{eq:CS_nuisance}
	 \end{equation}
  for some pre-specified value $\gamma \in [0,1]$.
\end{Definition}

The following theorem shows that nuisance-aware cutoffs control FPR and TPR at the specified level.

\begin{thm}[Nuisance-aware cutoffs for FPR/TPR control]\label{lemma:NA_cutoff}
Choose a threshold $\alpha \in [0, 1]$ and $\gamma \in [0, \alpha]$. Let $S_y(\x;\gamma)$ be a valid $(1-\gamma)$ confidence set for $\bnu$ at fixed $y \in \{0,1\}$ according to Definition~\ref{def:CS_nuisance}. Let $\lambda(\X)$ be any test statistic that measures how plausible it is that $\X$ was generated from $H_{0,y}$. Define the nuisance-aware rejection cutoff to be
\begin{equation} \label{eq:cutoff_FPR}
           C_{\alpha,y}^*(\x) = \inf_{\bnu \in S_{y}(\x;\gamma)} \{ W^{-1}_\lambda(\beta; y, \bnu)\},
\end{equation}
where $\beta=\alpha-\gamma$, and $W$ is the rejection probability in Definition~\ref{def:reject_prob}. 
Then, for all $\bnu \in \mathcal{N}$, we have FPR control:
 \begin{align}\label{eq:FPR_control}
 \P_\target &\left(\lambda{(\X}) \leq C_{\alpha,y}^*(\X) \mid y,\bnu \right) \leq \alpha \\
 &\ \ \ \text{(maximum type-I error probability for $H_{0,y}$)}. \notag
 \end{align}
Similarly, if 
$$\widetilde{C}_{\alpha,y}^*(\x) = \sup_{\bnu \in S_{1-y}(\x;\gamma)} \{ W^{-1}_\lambda(\beta; 1-y, \bnu)\},$$
with  $\beta=\alpha+\gamma$,
then for all $\bnu \in \mathcal{N}$, we have TPR control:
 \begin{align*}
\P_\target &\left(\lambda(\X) \leq \widetilde C^*_{\alpha,y}(\X) \mid 1-y,\bnu\right) \geq \alpha \\ 
&\ \  \ \   \ \text{(minimum recall for $H_{0,y}$)}.
  \end{align*}
\end{thm}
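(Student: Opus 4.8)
The plan is to derive both the FPR and the TPR statements from the same two-step template: a deterministic set inclusion that is valid on the event $\{\bnu \in S_y(\X;\gamma)\}$ that the true nuisance value is captured by the confidence set, followed by a union bound that charges the complementary event its probability $\gamma$. Start with FPR control. Fix an arbitrary $\bnu \in \mathcal{N}$ and work conditionally on $H_{0,y}$ (true label $y$, true nuisance $\bnu$); all probabilities below are $\P_\target(\cdot \mid y,\bnu)$. On the event $\{\bnu \in S_y(\X;\gamma)\}$, the point $\bnu$ belongs to the index set of the infimum defining $C_{\alpha,y}^*(\X)$, so $C_{\alpha,y}^*(\X) \le W^{-1}_\lambda(\beta;y,\bnu)$, and therefore
\[
\{\lambda(\X)\le C_{\alpha,y}^*(\X)\}\cap\{\bnu\in S_y(\X;\gamma)\}\ \subseteq\ \{\lambda(\X)\le W^{-1}_\lambda(\beta;y,\bnu)\}.
\]
Splitting on this event and bounding the complement via the validity of $S_y(\cdot;\gamma)$ (Definition~\ref{def:CS_nuisance}),
\begin{align*}
\P_\target\!\left(\lambda(\X)\le C_{\alpha,y}^*(\X)\mid y,\bnu\right)
&\le \P_\target\!\left(\lambda(\X)\le W^{-1}_\lambda(\beta;y,\bnu)\mid y,\bnu\right)+\P_\target\!\left(\bnu\notin S_y(\X;\gamma)\mid y,\bnu\right)\\
&\le \beta+\gamma=\alpha,
\end{align*}
where the first term is $W_\lambda\!\left(W^{-1}_\lambda(\beta;y,\bnu);y,\bnu\right)$, which equals $\beta$ under the inverse-rejection-probability convention discussed below. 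Since $\bnu$ was arbitrary, this gives Equation~\ref{eq:FPR_control}.

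For TPR control, condition instead on $H_{1,y}$ (true label $1-y$, true nuisance $\bnu$) and use the set $S_{1-y}(\X;\gamma)$, which is valid at fixed label $1-y$. On the event $\{\bnu\in S_{1-y}(\X;\gamma)\}$ the \emph{supremum} now gives the reverse inequality $\widetilde C_{\alpha,y}^*(\X)\ge W^{-1}_\lambda(\beta;1-y,\bnu)$, hence
\[
\{\lambda(\X)\le W^{-1}_\lambda(\beta;1-y,\bnu)\}\cap\{\bnu\in S_{1-y}(\X;\gamma)\}\ \subseteq\ \{\lambda(\X)\le \widetilde C_{\alpha,y}^*(\X)\}.
\]
Using $\P(A\cap B)\ge\P(A)-\P(B^c)$,
\begin{align*}
\P_\target\!\left(\lambda(\X)\le\widetilde C_{\alpha,y}^*(\X)\mid 1-y,\bnu\right)
&\ge \P_\target\!\left(\lambda(\X)\le W^{-1}_\lambda(\beta;1-y,\bnu)\mid 1-y,\bnu\right)-\P_\target\!\left(\bnu\notin S_{1-y}(\X;\gamma)\mid 1-y,\bnu\right)\\
&\ge \beta-\gamma=\alpha,
\end{align*}
since $W_\lambda\!\left(W^{-1}_\lambda(\beta;1-y,\bnu);1-y,\bnu\right)\ge\beta$ and $\beta=\alpha+\gamma$ (when $\alpha+\gamma>1$ the inverse is $+\infty$ and the bound is trivial).

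The step I expect to require the most care is the precise meaning of $W^{-1}_\lambda(\beta;\cdot,\cdot)$ when the test statistic $\lambda(\X)$ has atoms under some $(y,\bnu)$: then $W_\lambda(\cdot;y,\bnu)$ is a step function, and no single convention for the generalized inverse can simultaneously guarantee $W_\lambda(W^{-1}_\lambda(\beta))\le\beta$ (needed in the FPR bound) and $W_\lambda(W^{-1}_\lambda(\beta))\ge\beta$ (needed in the TPR bound). I would state the theorem under the standard assumption that $\lambda(\X)$ is continuously distributed under every $(y,\bnu)$, so that $W_\lambda(\cdot;y,\bnu)$ is continuous and $W_\lambda(W^{-1}_\lambda(\beta);y,\bnu)=\beta$ exactly, both claims then holding with equality in that step; in the atomic case one takes the one-sided inverse appropriate to each claim and the bounds degrade only by the size of the largest jump. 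Everything else is the two short set-inclusion/union-bound computations above. Finally, although $W$ is written here through $\P_\target$, Lemma~\ref{lemma:reject_prob_invariance} shows it coincides with the analogous $\P_\train$ quantity, which is what makes the cutoffs $C_{\alpha,y}^*$ and $\widetilde C_{\alpha,y}^*$ estimable from the simulated training set in practice.
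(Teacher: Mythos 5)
Your proof is correct and follows essentially the same route as the paper's: split on the event $\{\bnu \in S_y(\X;\gamma)\}$, use the monotonicity of the infimum/supremum over the confidence set to compare the data-dependent cutoff with $W^{-1}_\lambda(\beta;\cdot,\bnu)$ at the true $\bnu$, and absorb the complement via the union bound to get $\beta+\gamma=\alpha$ (the paper handles the TPR case by upper-bounding $\P(\lambda(\X)\geq \widetilde C^*_{\alpha,y}(\X))$ by $1-\beta+\gamma$ rather than your equivalent $\P(A\cap B)\geq \P(A)-\P(B^c)$ step, a cosmetic difference). Your remark about the generalized-inverse convention when $\lambda(\X)$ has atoms is a legitimate point of care that the paper's proof passes over silently.
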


\subsubsection{Properties of the Nuisance-Aware Prediction Set}

The nuisance-aware prediction set (Definition~\ref{def:NAPS}) is both {\em conditionally} and \textit{marginally} valid with respect to both $y$ and $\bnu$ under GLS. 

\begin{thm} 
\label{thm:nacs_coverage}
Let $\H(\x;\alpha)$ be the  nuisance-aware  prediction set  of Definition \ref{def:NAPS}. Under GLS, for every $y \in \{0,1\}$ and $\nu \in \mathcal{N}$ 
$$\P_\target(Y \in \H(\X;\alpha) \mid y,\bnu)\geq  1-\alpha.$$
Moreover,
$$\P_\target(Y \in \H(\X;\alpha))\geq  1-\alpha.$$
\end{thm}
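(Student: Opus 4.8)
The plan is to obtain the conditional statement directly from Theorem~\ref{lemma:NA_cutoff} applied to the test statistic $\lambda = \widehat{\tau}_y$, and then deduce the marginal statement by averaging the conditional bound over the target law of $(Y,\bnu)$.

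First I would rewrite the non-coverage event in terms of the rejection event. By Definition~\ref{def:NAPS}, $y \notin \H(\x;\alpha)$ if and only if $\widehat{\tau}_y(\x) \le C^*_{\alpha,y}(\x)$. Hence, conditioning on the true label being $y \in \{0,1\}$ and on any $\bnu \in \mathcal{N}$,
\begin{equation*}
\P_\target\bigl(Y \notin \H(\X;\alpha) \mid y,\bnu\bigr) = \P_\target\bigl(\widehat{\tau}_y(\X) \le C^*_{\alpha,y}(\X) \mid y,\bnu\bigr).
\end{equation*}
The right-hand side is precisely the type-I error probability of the test of $H_{0,y}$ that rejects for small $\widehat{\tau}_y$ using the nuisance-aware cutoff of Equation~\ref{eq:cutoff_FPR}. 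Since the NAPS is built with $\gamma \in [0,\alpha]$ and with $S_y(\x;\gamma)$ a valid $(1-\gamma)$ confidence set for $\bnu$ in the sense of Definition~\ref{def:CS_nuisance}, the FPR-control half of Theorem~\ref{lemma:NA_cutoff} (taking $\lambda=\widehat{\tau}_y$, $\beta=\alpha-\gamma$) gives that this probability is at most $\alpha$ for every $\bnu$. Taking complements yields $\P_\target(Y \in \H(\X;\alpha)\mid y,\bnu) \ge 1-\alpha$ for all $y \in \{0,1\}$ and all $\bnu \in \mathcal{N}$, which is the conditional validity claim. Note that this uses nothing about $p_\target$ beyond $p_\target(\x\mid y,\bnu)=p_\train(\x\mid y,\bnu)$, which enters through the invariance of the rejection probability (Lemma~\ref{lemma:reject_prob_invariance}) already embedded in Theorem~\ref{lemma:NA_cutoff}.

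For the marginal statement I would apply the tower property with respect to the target joint distribution of $(Y,\bnu)$:
\begin{equation*}
\P_\target(Y \in \H(\X;\alpha)) = \E_{(Y,\bnu)\sim p_\target}\bigl[\P_\target(Y \in \H(\X;\alpha)\mid Y,\bnu)\bigr] \ge \E_{(Y,\bnu)\sim p_\target}[1-\alpha] = 1-\alpha,
\end{equation*}
where the inequality is the uniform conditional bound just established. Since the conditional coverage holds for every $(y,\bnu)$ in the support, the inequality passes through the expectation with no integrability subtleties (the conditional probabilities are bounded by $1$).

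I do not expect a genuinely hard analytic step here: the result is essentially a corollary of Theorem~\ref{lemma:NA_cutoff}. The main thing to be careful about is bookkeeping — verifying that, conditional on true label $y$, the event $\{Y \notin \H(\X;\alpha)\}$ coincides exactly with the $H_{0,y}$ rejection event $\{\widehat{\tau}_y(\X)\le C^*_{\alpha,y}(\X)\}$; that the confidence set $S_y(\x;\gamma)$ appearing in Definition~\ref{def:NAPS} is the one whose $(1-\gamma)$ validity is hypothesized in Theorem~\ref{lemma:NA_cutoff}; and that $\widehat{\tau}_y$ is a legitimate choice of the "plausibility" statistic $\lambda$ (small values indicating implausibility under $H_{0,y}$), so that the invocation of the theorem is valid for each $y\in\{0,1\}$ separately.
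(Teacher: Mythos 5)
Your proposal is correct and follows essentially the same route as the paper: identify the non-coverage event $\{Y\notin\H(\X;\alpha)\}$ given true label $y$ with the rejection event $\{\widehat\tau_y(\X)\le C^*_{\alpha,y}(\X)\}$, invoke the FPR-control half of Theorem~\ref{lemma:NA_cutoff} to bound its conditional probability by $\alpha$, and integrate the uniform conditional bound against the target law of $(Y,\bnu)$ to get the marginal claim. No gaps.
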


\section{Experiments}\label{sec:experiments}

\subsection{Synthetic Example}\label{sec:synthetic_example}

Consider a simplified setting where we are certain about the data-generating process of $Y=1$ cases of interest, but not about that of $Y=0$ cases. We assume
\begin{align*}
    p(x_i \mid Y_i = 1) &= \frac{e^{x_i}}{e-1} \\
    p(x_i \mid Y_i = 0, \nu_i) &= \frac{\nu_i e^{-\nu_i x_i}}{1-e^{-\nu_i}},
\end{align*}
where $\nu \in [1, 10]$ is a nuisance parameter, which enlarges the model for $Y=0$ to reflect our uncertainty of how cases of no direct interest might manifest themselves.

\paragraph{Before Data Collection} Before having specific knowledge about target data and experimental conditions, we decide to draw $\nu$ from a uniform reference distribution $p_\train(\nu)=\mathcal{U}[1,10]$ (here $\P_\train(Y=1)=\P_\target(Y=1)=0.5$ is fixed). We then pre-train a classifier\footnote{In this simplified example we can actually compute everything semi-analytically.} and compute the class posterior $\P_\train(Y=1|x)$, and construct $(1-\alpha)$ prediction sets 
\begin{align}
    \label{eq:standardPS}
R_\alpha(x) := \{y : \P_\train(Y = y \mid x) > C_{\alpha}^* \}\end{align}
with cutoffs 
$$C_{\alpha}^* \ \ \text{s.t.} \ \  \pr_\train \left(  \P_\train(Y = y \mid X) \leq C_{\alpha}^* \right) = \alpha,$$
for a pre-specified miscoverage level $\alpha$.
These are the oracle prediction sets that minimize ambiguity (i.e., average size) subject to having the correct total coverage according the Theorem 1 from \citet{sadinle2019least}. We will henceforth refer to them as ``standard prediction sets'' to distinguish them from the oracle class-conditional prediction sets from \citet{sadinle2019least} and NAPS.

\paragraph{Setting 1: No GLS} When train and target data have the same distributions, the prediction sets $R_\alpha(X)$ have guaranteed marginal coverage 
$$\P_\train(Y \in R_\alpha(X)) = 1-\alpha$$
at the nominal $(1-\alpha)$ level by construction (red curve overlapping black bisector in Figure \ref{fig:synth2}, top left), although they might still undercover in specific regions of the nuisance parameter space (see Figure \ref{fig:synth_covextra} in Appendix \ref{sec:deep-dive}). NAPS with $\gamma=0$ are instead both marginally valid (blue curve, top left) and conditionally valid (Theorem~\ref{thm:nacs_coverage}). The latter “universality” can cause overly conservative prediction sets and a loss of power (defined as the probability of rejecting $H_{0,y}: Y = y$ when $Y \neq y$); see bottom left panel.

\paragraph{Setting 2: With GLS} Suppose now that we apply the pre-trained classifier to a target distribution with a {\em different} distribution over the nuisance parameters, namely $p_\target(\nu)=N(4, 0.1) \neq p_\train(\nu)$. The top right panel of Figure~\ref{fig:synth2} shows that the prediction sets $R_\alpha(X)$ are no longer valid even marginally (red curve below bisector), whereas NAPS are still valid. Moreover, we can achieve higher power by constraining the optimization to a high-confidence set of the nuisance parameter (compare green with blue NAPS curves). In summary: our proposed method can leverage the original  $\P_\train(Y= 1 \mid \x)$ classifier to provide prediction sets that are both valid and precise for any distribution $p(y,\nu)$ as long as $x|y,\nu$ stays the same. Additional results for other prediction set methods and NAPS with $\gamma > 0$ are available in Appendix \ref{sec:deep-dive}.

\subsection{Single-Cell RNA Sequencing}\label{sec:rna_seq}

\begin{figure}[t!]
    \centering
    \includegraphics[width=1\columnwidth]{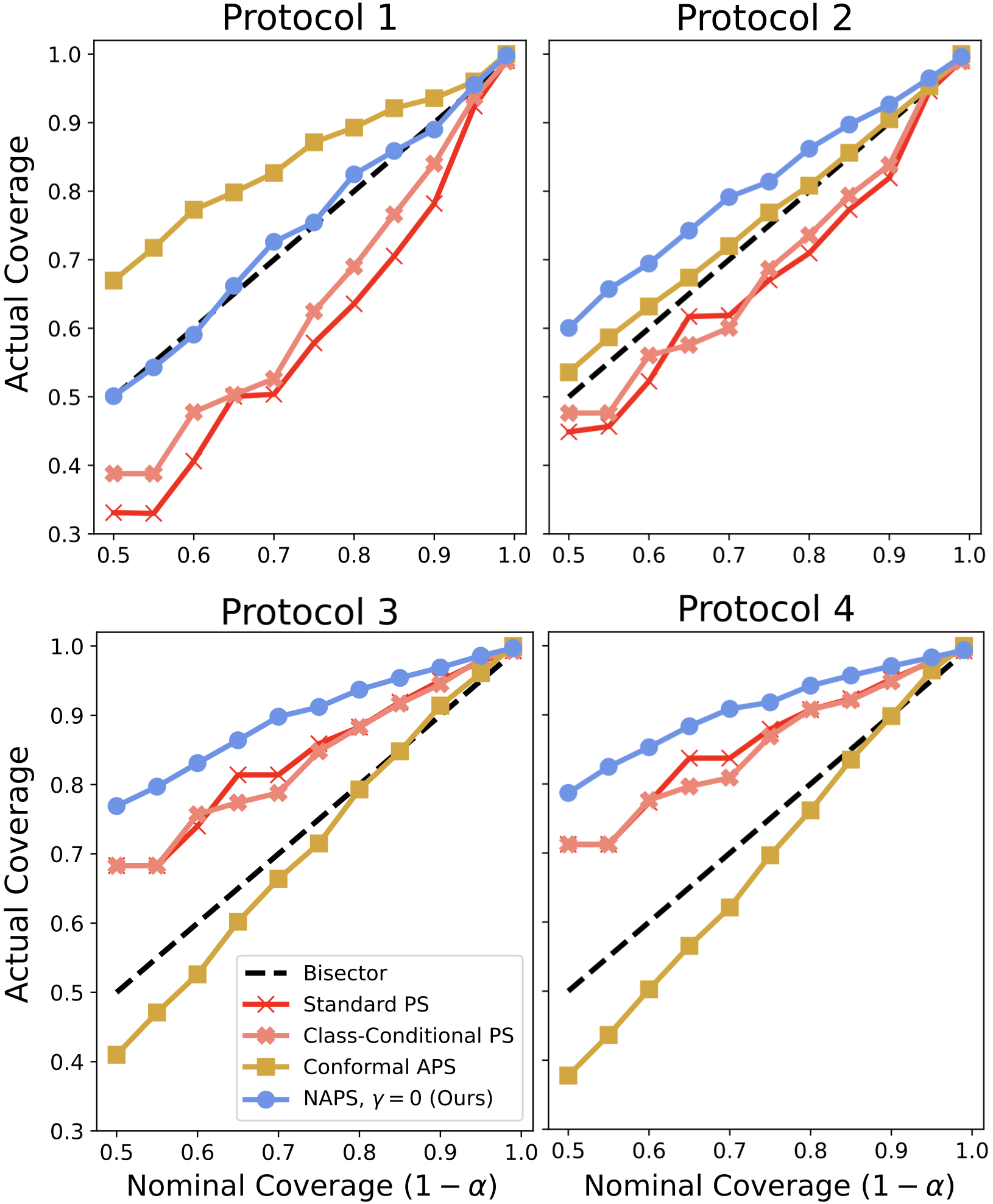}
    \caption{\textbf{Coverage under different batch protocols $\nu$ for the RNA-Seq example.} Each marker represents the proportion of samples in the test set whose true label was included in the constructed prediction sets. Nuisance-aware prediction sets (NAPS $\gamma=0$; \textcolor{NavyBlue}{blue}) are valid regardless of the protocol, which is unknown at inference time. All other methods for prediction sets with marginal coverage (\textcolor{BrickRed}{red}), class-conditional coverage (\textcolor{Melon}{pink}), and conformal adaptive prediction sets (\textcolor{YellowOrange}{gold}) undercover for at least two batch protocols.}
    \label{fig:rna_seq_cond_coverage}
\end{figure}

RNA sequencing, or RNA-Seq, is a vital technique in genetics and genomics research that has revolutionized our understanding of gene expression. Many RNA-seq experiments involve extracting RNA from target cells and examining counts of specific genes. While the natural variation in gene counts between different types of cells is interesting to researchers, the observed gene counts depend also on the precise steps of the sequencing process. For example, the exact chemicals, equipment, room temperature and lab technician can greatly influence the final measurements, in addition to the cell type. In practice, these so-called “batch effects” are often unmeasured confounders whose exact value is unknown at the inference stage. Thus, analysis of experimental gene counts must take them into account in order to conduct reliable scientific analysis. In what follows, we define a “batch protocol” to be a particular set of these conditions common to a batch of cells.

We use data from the recently proposed \texttt{scDesign3} simulator \citep{song2023scdesign3}, with reference data taken from the PBMC Systematic Comparative Analysis \citep{ding2019systematic}. We consider two cell types (\texttt{CD4\textsuperscript{+}} T-cells and \texttt{Cytotoxic} T-cells) and a subset of 100 random genes. The reference data contains counts from two separate experiments, which will serve as the basis of our simulated batch protocols. We use the two original experimental conditions as well as two artificial perturbations derived from them to generate four possible batch protocols. Following our terminology, this corresponds to a discrete nuisance parameter with four groups. We consider the setup of a classifier trained on data from all four possible protocols and tested on different $\x_{\target}$ whose true protocol value is unknown (in addition to the cell type). In total, we have available $80{,}000$ samples which we divide into train ($60\%$), calibration ($35\%$) and test ($5\%$) sets. Our goal is to infer the cell's type from the observed gene count under the presence of the unknown nuisance parameter. 

We compare our method with three baselines: (i) standard prediction sets for which cutoffs are computed from $\mathbb{P}(Y|\X)$ \citep[Theorem 1]{sadinle2019least}; (ii) class-conditional prediction sets with cutoffs derived separately from each $\mathbb{P}(Y=i|\X),$ $i \in \{0,1\}$ \citep{sadinle2019least}; and (iii) conformal adaptive prediction sets (APS; \citet{romano2020classification}). Figure~\ref{fig:rna_seq_cond_coverage} shows that nuisance-aware prediction sets (NAPS) are valid regardless of the protocol, which is unknown at inference time. On the other hand, all of the other prediction sets from the analyzed baselines undercover for at least two protocols. Nuisance-aware cutoffs need to control type-I error for every single value of the nuisance parameter, including the hardest case. Here, Protocol 1 (top left) appears to be the most difficult to classify correctly. Finally, we note that while conformal APS approximately achieves coverage for $(1-\alpha) \approx 1$, this comes at the expense of uninformative prediction sets that contain both labels for all $\x_{\target}$. NAPS, on the other hand, is able to maintain high power (see Figure \ref{fig:rna_wtp} in Appendix~\ref{app:rna_seq}). Additional results and details on the base classifier, the model used to estimate the rejection probability function, and the baselines adopted for comparison can be found in Appendix~\ref{app:rna_seq}.

 \subsection{Atmospheric Cosmic-Ray Showers}\label{sec:cosmic_rays}

\begin{figure}[b!]
    \centering
    \includegraphics[width=1\columnwidth]{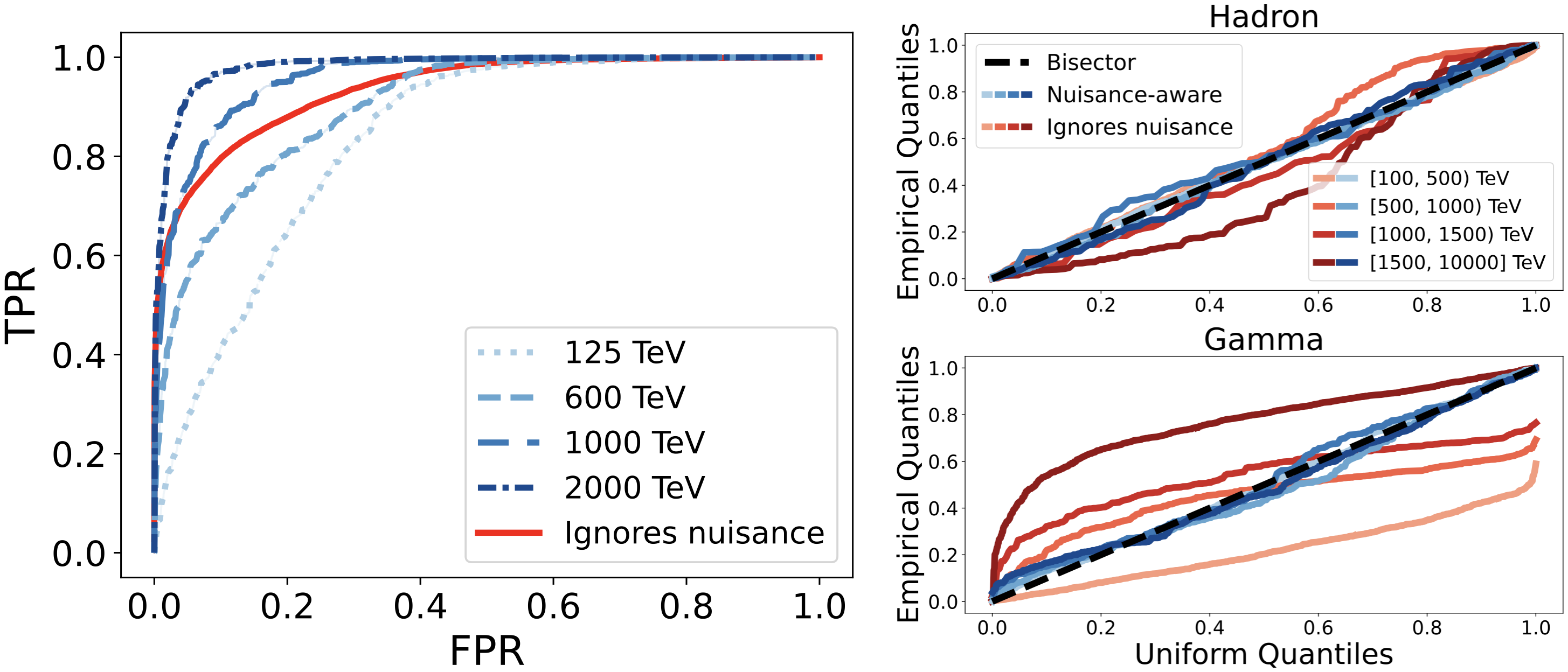}
    \caption{\textbf{Dependence of the ROC on the energy of the cosmic-ray shower.} 
    \textit{Left:} Receiver operating characteristic evaluated according to our method at different energy values (shades of \textcolor{NavyBlue}{blue}). By estimating the entire ROC, we can control FPR or TPR at specified confidence levels for all $\bnu \in \Ncal$, which is not possible with the “marginal” ROC curve (\textcolor{BrickRed}{red}). \textit{Right:} Diagnostic P-P plot evaluated at four bins over energy for nuisance-aware ROC (shades of \textcolor{NavyBlue}{blue}) and ROC that ignores nuisances (shades of \textcolor{BrickRed}{red}). To check if $\pr_\target \left(  \lambda(\X) \leq C |y,\bnu\right)$ is well estimated, we plot PIT values against a $\mathcal{U}(0, 1)$ distribution (dashed bisector; see Appendix~\ref{sec:roc_diag} for details). This is clearly not the case if one ignores nuisance parameters.}
    \label{fig:cosmic_ROC}
\end{figure}

High-energy cosmic rays, both charged and neutral, are extremely informative probes of astrophysical sources in our galaxy and beyond. Gamma rays (which constitute the vast majority of neutral cosmics) reach the Earth atmosphere from specific directions that coincide with the location of the originating source in the sky. On the other hand, charged cosmic rays (hadrons) arrive from non-informative directions as they get deflected by galactic magnetic fields while travelling. An important step in analyzing gamma-ray sources is to separate gamma-induced showers (G) from the very large background ($>1000:1$) of hadron-induced showers (H) using ground-based detector arrays that collect particles $\x$ from secondary showers (\citet{dorigo2023end}; see top left of Figure~\ref{fig:confset_teststat_data} for an illustration). G/H separation is a challenging rare-event detection problem, where the true distribution of both the shower type $Y$ and the shower parameters $\bnu$ might be misspecified in simulated data. Our goal is to infer the cosmic ray identity $Y$ from ground measurements $\X$ while accounting for additional shower parameters: energy $E$, azimuth angle $A$ and zenith angle $Z$. Together, these form a nuisance parameter vector $\bnu = (E, A, Z)$. We construct a data set of $99{,}850$ samples simulated from \texttt{CORSIKA} \citep{heck1998corsika} divided into train ($45\%$), calibration ($45\%$) and test ($10\%$) sets. Figure~\ref{fig:cosmic_ROC} (left) shows several ROC curves as a function of different energy values, demonstrating a clear dependency of the classification problem on this shower parameter.

\begin{figure}[t!]
    \centering
    \includegraphics[width=1\columnwidth]{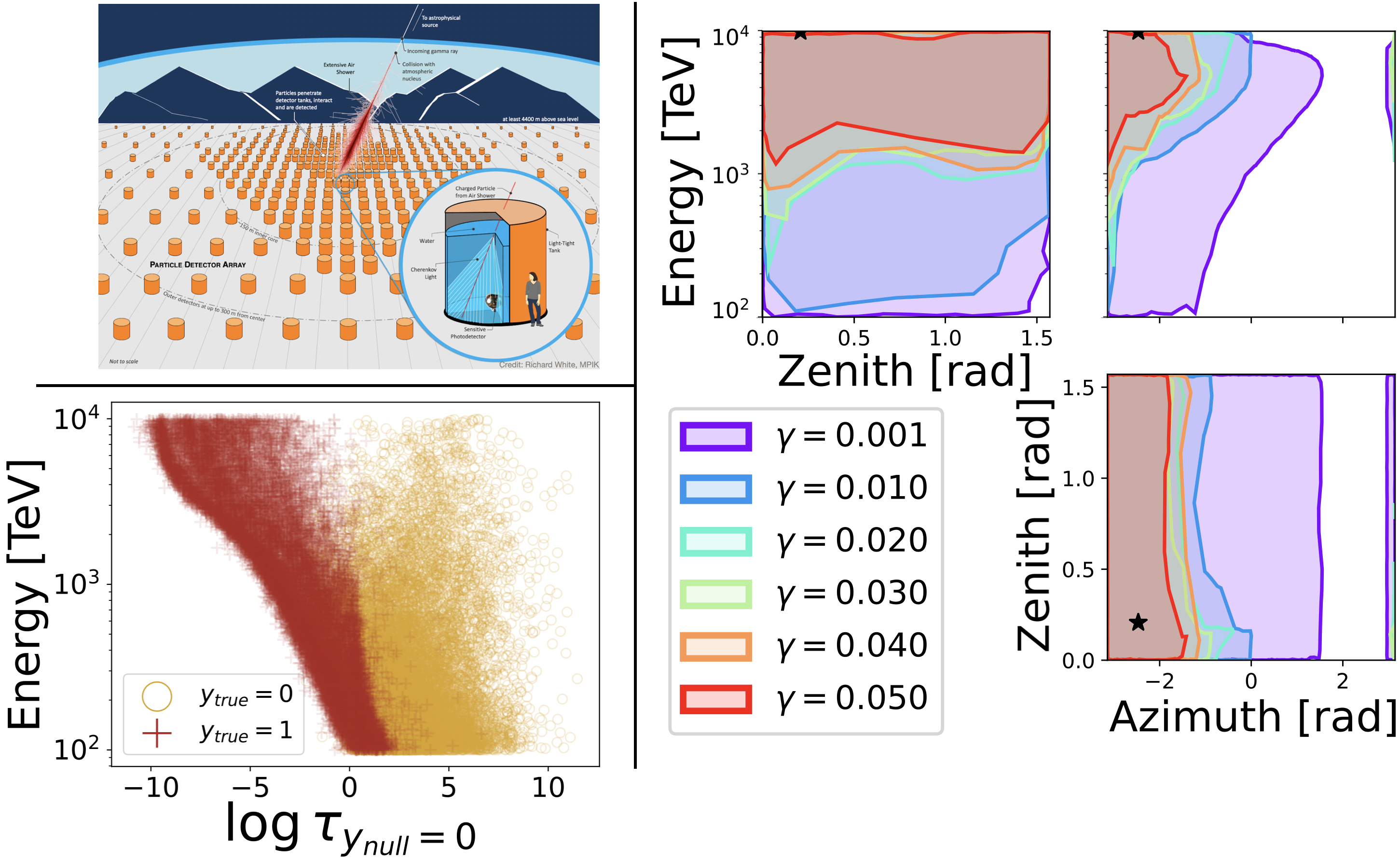}
    \caption{\textbf{Constraining the cosmic ray shower parameters.} \textit{Top left}: Illustration of the Southern Wide-field Gamma-ray Observatory (SWGO; \citet{abreu2019southern}; image credit: Richard White) array of detectors with an incoming gamma ray (\textcolor{BrickRed}{red}). \textit{Bottom Left}: Test statistic under $y_0 = 0$ (hadron) as a function of energy. At high energies, the class-conditional test statistics are well separated, implying that it is easier to distinguish gamma showers (\textcolor{BrickRed}{red}) from hadron showers (\textcolor{YellowOrange}{gold}). \textit{Right}: Confidence set for $\bnu$ at different $(1-\gamma)$ confidence levels obtained via the framework of \citet{masserano2023simulator}. The true value of $\bnu$ is the black star.}
    \label{fig:confset_teststat_data}
\end{figure}

Figure~\ref{fig:cosmic_proportions_gamma} summarizes our results as a function of the confidence level $(1-\alpha)$ for different classification metrics. These are computed within true and within predicted gamma rays for two different bins whose border is the median energy level. Nuisance-aware prediction sets (NAPS with $\gamma=0$) achieve high precision and low false discovery rates but slightly under-perform relative to the standard Bayes classifier (Appendix~\ref{app:bayes_clf}) for lower energy values (left column in Figure~\ref{fig:cosmic_proportions_gamma}), specifically at low confidence levels. This behaviour originates from the complexity of the data: at lower energies it is indeed much harder to distinguish gamma rays from hadrons (see bottom left panel of Figure~\ref{fig:confset_teststat_data}). 
\begin{figure}[b!]
    \centering
    \includegraphics[width=1\columnwidth]{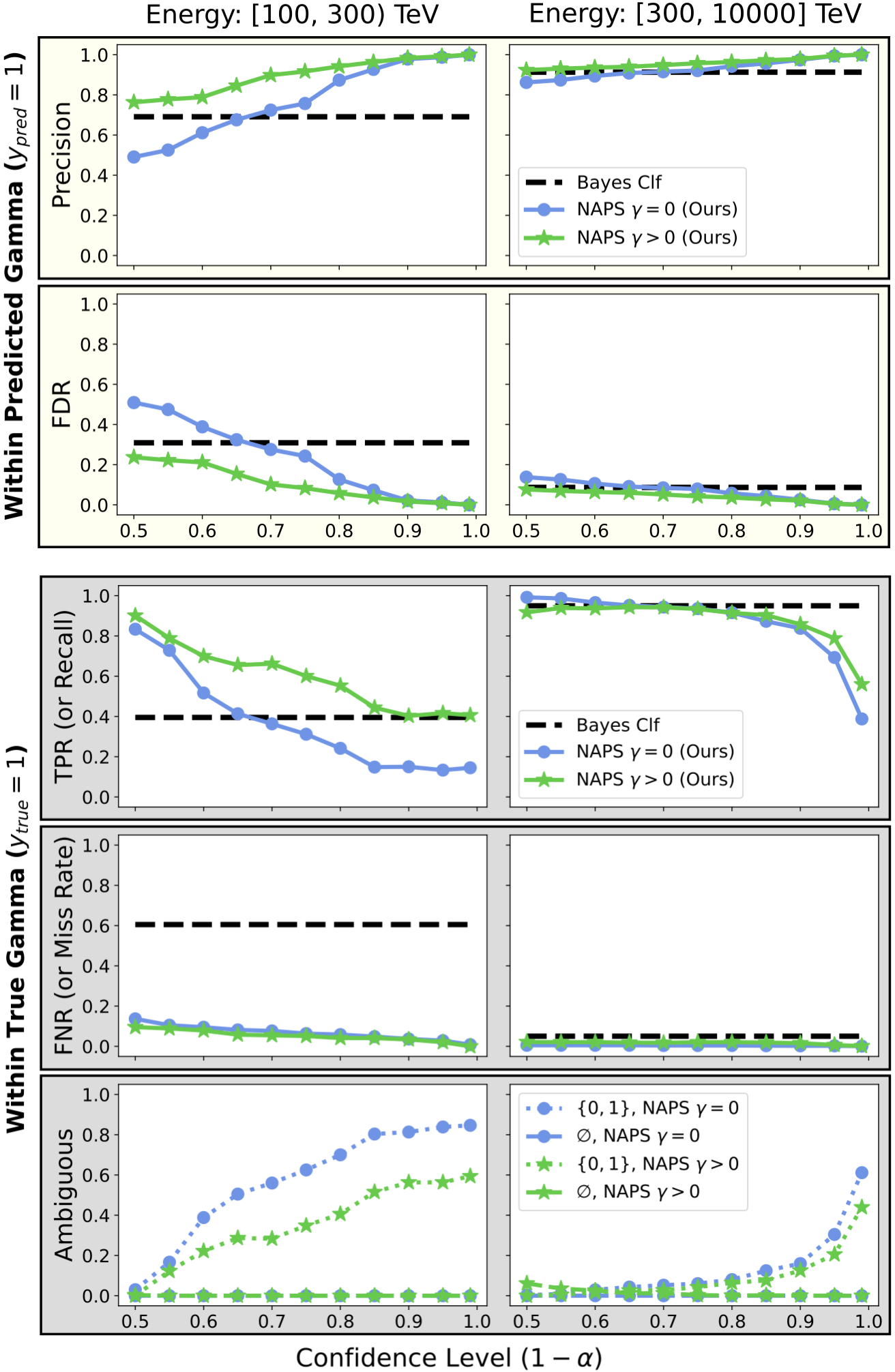}
    \caption{\textbf{Classification metrics within true and within predicted Gamma rays ($y=1$).} Results are binned according to whether the shower energy is below (left) or above (right) the median value. \textit{Top panel:} Nuisance-aware prediction sets (NAPS $\gamma=0$; \textcolor{NavyBlue}{blue}) achieve high precision and low false discovery rates (FDR), especially at high confidence levels. In addition, by constraining the nuisance parameters $\bnu = (E, A, Z)$, we can increase performance (NAPS $\gamma>0$; \textcolor{LimeGreen}{green}) with uniformly better results relative to the standard Bayes classifier (black dashed line). \textit{Bottom panel:} Our set-valued classifier makes explicit its level of uncertainty on the label $y$ by returning ambiguous prediction sets (bottom row) for hard-to-classify $\x_{\target}$. Even so, NAPS with $\gamma>0$ is able to achieve a higher number of true positives and lower number of false negatives relative to the Bayes classifier. Here $\gamma = \alpha \times 0.3$.}
    \label{fig:cosmic_proportions_gamma}
\end{figure}

By constructing $(1-\gamma)$ confidence sets for $\bnu$ (see the right panel of Figure~\ref{fig:confset_teststat_data} for an example), we are able to outperform the standard Bayes classifier at all confidence levels (NAPS with $\gamma>0$). This result is explained by the bottom panel in Figure~\ref{fig:cosmic_proportions_gamma}: NAPS predicts a single label only when it is relatively certain about it, and otherwise outputs an ambiguous prediction set that contains both labels. Nonetheless, for this example, NAPS with $\gamma=0$ is able to achieve a higher number of true positives and lower number of false negatives relative to the Bayes classifier. Additional results and details on the models used can be found in Appendix~\ref{app:cosmic_rays}.
\section{Conclusion and Discussion}\label{sec:conclusion}

The introduction of nuisance parameters complicates the effectiveness and reliability of machine learning models in tasks such as classification. This paper introduces a new method for handling prior probability shift of both label and nuisance parameters in likelihood-free inference when a high-fidelity mechanistic model is available. We demonstrate a new technique for estimating the ROC across the entire parameter space for binary classification problems. We also show how to construct set-valued classifiers that have a guaranteed user-specified probability $(1-\alpha)$ of including the true label (parameter of interest), for all levels $\alpha \in [0,1]$ simultaneously, without having to retrain the model for every $\alpha$. These set-valued classifiers are valid, no matter what the true label and unknown nuisance parameters are. Finally, we demonstrate how to increase power while maintaining validity by constraining nuisance parameters.

{\bf Extensions and Limitations.}  Our approach can be extended to standard classification problems where the training data does not come from a simulator, as long as \add{(i) the nuisance parameters $\bnu$ in the data-generating process have been identified and are available at training time, and (ii) we can reliably estimate the rejection probability function across the entire parameter space as in Section~\ref{sec:RejectionAndROC}. We recommend checking the latter with diagnostic P-P plots (see Appendix~\ref{sec:roc_diag}, and Figure~\ref{fig:cosmic_ROC} (right) for an example).} 

NAPS directly extends to \add{multiclass as one-vs-one} problems, since we can estimate one-vs-one ROC curves for each $\bnu \in \mathcal{N}$. \add{The computational cost for $K$ classes would increase by a factor of $\binom{K}{2}$. However, an extension to multiclass as one-vs-rest problems is non-trivial, because estimating ROC curves requires knowledge of the distribution of labels $Y$ on the target set for every nuisance parameter $\mathbf{\nu}$. Without such knowledge, the ROC curves would not be invariant to GLS.}

\add{NAPS achieves validity under GLS. However, in the absence of a shift, this results in reduced power compared to standard prediction sets (Equation~\ref{eq:standardPS}).}
Although we can recover some of this power by constraining nuisance parameters (i.e. setting $\gamma > 0$), the cutoffs need to be computed for {\em each} test point, which can be computationally expensive, especially for high-dimensional $\bnu$. Furthermore, setting $\gamma > 0$ is not guaranteed to increase power relative to  $\gamma = 0$: Since rejection probability inversion is performed at level $\alpha - \gamma$, power might $\emph{decrease}$ when optimizing the NAPS cutoff over the ($1 - \gamma$) confidence set for $\bnu$ (see Equation \ref{eq:cutoff_FPR}). This can occur if the ($1 - \gamma$) confidence sets are too large, or when the distribution of $\bnu$ is skewed toward certain regions (Figure \ref{fig:synth_nucs}). For further discussion, refer to Appendix ~\ref{sec:increase_power}.

Finally, we note that NAPS may sometimes result in empty prediction sets, though this is uncommon when $(1-\alpha)$ is large. Future adaptations could incorporate strategies from \citet{sadinle2019least} to mitigate this issue.
\section*{Acknowledgements} We thank Larry Wasserman and Mikael Kuusela for their feedback on earlier versions of the work. We are also grateful to Federico Nardi and Will Townes for their help with the cosmic rays and RNA sequencing examples, respectively. ABL \add{and AS are} partially supported by NSF DMS-2053804. RI is partially supported by FAPESP-2023/07068-1 and CNPq-305065/2023-8.
\section*{Impact Statement} In the physical and biological sciences, nuisance parameters are often needed to account for limitations in the modeling of the underlying processes. However, their inclusion reduces the effectiveness of machine learning and statistical procedures. Nuisance parameters (sometimes also referred to as systematic uncertainties) are one of the main factors limiting the precision and discovery reach of scientific analyses. Our work addresses this issue and could have a broader impact on reliable scientific discovery.

\bibliography{references}
\bibliographystyle{icml2024}

\newpage
\appendix
\onecolumn
\section{The Bayes Factor as a Frequentist Test Statistic}\label{sec:app_bayes_factor} 

In this work, we treat the Bayes factor as a frequentist test statistic, similar to the Bayes Frequentist Factor (\texttt{BFF}) method in \cite{dalmasso2021likelihood}. Consider the composite-versus-composite hypothesis test: 
 \begin{equation} \label{eq:discrimination_test}
    H_{0,y}: \btheta \in \Theta_0   \ \ \text{versus} \ \ H_{1, y}: \btheta \in \Theta_1 
\end{equation}
where $\Theta_0= \{y\} \times \mathcal{N}$, $\Theta_1= \{y\}^c \times \mathcal{N}$, and $y \in \{0,1\}$.
The Bayes factor of the test
is defined as
\begin{equation*}
 \tau_{y}(\x) := \frac{\P'(\x|H_{0,y})}{\P'(\x|H_{1,y})} =  \frac{\int_{\mathcal{N}}  \mathcal{L}(\x;y, \bnu) \; p'(\bnu|y) \;d\bnu}
 { \int_{\mathcal{N}}   \mathcal{L}(\x; 1-y, \bnu) \;  p'(\bnu|1-y) \; d\bnu} 
\end{equation*}
By Bayes theorem,
\begin{align}
 \tau_{y}(\x) & = 
 \frac{\int_{\mathcal{N}} \frac{ p'(y, \bnu|\x)}{p'(y, \bnu)} 
  p'(\bnu|y) \; d\bnu  }
 { \int_{\mathcal{N}} \frac{ p'( 1-y, \bnu|\x)}{p'( 1-y, \bnu)}   
  p'(\bnu| 1-y) \; d\bnu } \nonumber 
  = \frac{\int_{\mathcal{N}} \frac{ p'(y, \bnu|\x)}
 {\P'(Y=y) } 
  \; d\bnu  }
 { \int_{\mathcal{N}} \frac{ p'( 1-y, \bnu|\x)}{\P'(Y =1- y) }   
   \; d\bnu } \nonumber \\
  &   = \frac{\P'(Y=y|\x) \; \P'(Y =1-y)}
  {\P'(Y =1- y|\x) \; \P'(Y=y)}.
   \label{eq:BFF_statistic}
\end{align}

However, unlike \texttt{BFF}, we are not estimating the likelihood or odds from simulated data, but instead directly evaluate a pretrained classifier $\P'(Y=y|\x)$.

\section{Proofs}\label{sec:poofs}

For simplicity in notation, we will henceforth omit the ``train'' and ``target'' subscripts in $\P$. The symbol $\P'$ will represent the training distribution, while $\P$ will denote the target distribution.

\begin{proof}[Proof of Lemma \ref{lemma:reject_prob_invariance}]
    This follows from the fact that $W_{\lambda}(C; y, \bnu)$ only depends on the conditional randomness of $\X|y,\nu$, which, under GLS, is the same on both train and target data.
\end{proof}

\begin{proof}[Proof of Theorem \ref{lemma:NA_cutoff}] 
Notice that
\begin{equation*}
    \begin{split}
        \P (\lambda(\X) \leq C_{\alpha,y}^*(\X)| y,\bnu ) &= 
        \P(\lambda(\X) \leq C_{\alpha,y}^*(\X),\bnu \in S_y(\X;\gamma)| y,\bnu)
         + \P (\lambda(\X) \leq C_{\alpha,y}^*(\X),\bnu \notin S_y(\X;\gamma)| y,\bnu) \\
        &\leq  \P ( \lambda(\X) \leq W^{-1}_\lambda(\beta; y, \bnu)| y,\bnu )
        +\P(\bnu \notin S_y(\X;\gamma)| y,\bnu) \\ 
        &\leq \beta+\gamma=\alpha,
    \end{split}
\end{equation*}
which proves the first part of the result.
Similarly,
\begin{align*}
    \P(\lambda(\X) \geq \widetilde C^*_{\alpha,y}(\X)|1-y,\bnu) &=  
    \P (\lambda(\X) \geq \widetilde C^*_{\alpha,y},\bnu \in S_{1-y}(\X;\gamma)|1-y,\bnu)
    \P (\lambda(\X) \geq \widetilde C^*_{\alpha,y},\bnu \notin S_{1-y}(\X;\gamma)|1-y,\bnu) \\
    &\leq  \P(\lambda(\X) \geq W^{-1}_\lambda(\beta; 1-y, \bnu)|1-y,\bnu)
    + \P (\bnu \notin S_{1-y}(\X;\gamma)|1-y,\bnu) \\ 
    &\leq  1-\beta+\gamma=1-\alpha,
\end{align*}
and therefore 
$$ \P(\lambda(\X) \leq \widetilde C^*_{\alpha,y}(\X)|1-y,\bnu) \geq \alpha,$$
which concludes the proof.
\end{proof}

\begin{proof}[Proof of Theorem \ref{thm:nacs_coverage}]
   By construction 
\begin{align*}
    \P(Y \in \H(\X;\alpha)|y,\bnu) &=  \P\left(\hat \tau_{y}(\X) > C_{\alpha,y}^*(\X)|y,\bnu\right) \\
    &= 1 -\P\left(\hat \tau_{y}(\X) \leq C_{\alpha,y}^*(\X)|y,\bnu\right) \\
    &\geq 1- \alpha,
\end{align*}
where the last inequality follows from Lemma \ref{lemma:NA_cutoff}. This proves the first statement of the theorem. To prove the second statement, notice that
\begin{align*}
    \P(Y \in \H(\X;\alpha))&=\int \P(Y \in \H(\X;\alpha)|y,\bnu)d \mu(y,\bnu)\\ 
    &\geq \int (1-\alpha)d \mu(y,\bnu) \\
    &=1-\alpha,
\end{align*}
where $\mu(y,\bnu)$ denotes the measure on $(Y,\bnu)$ on the target set.
\end{proof}

\section{Estimating the Rejection Probability Function}\label{sec:power_func}

We learn $W_{\lambda}(C; y, \bnu)$ using a monotone regression that enforces the rejection probability to be a non-decreasing function of $C$. For each point $i=1,\ldots, B^\prime$ in the calibration set $\mathcal{T}'=\{(Y_1, \bnu_1, \X_1),\ldots,(Y_{B^\prime}, \bnu_{B^\prime}, \X_{B^\prime})\}$ drawn from $p_\train(\btheta)  \mathcal{L}(\x;\btheta)$ where $\btheta=(Y,\bnu)$, we sample a set of $K$ cutoffs according to the empirical distribution of the test statistic $\lambda$. Then, we regress the random variable 
\begin{align}
Z_{i,j}:= \I \left( \lambda(\X_i) \leq C_j \right )
\end{align}
on $Y_i$, $\bnu_i$ and $C_{i,j}$ ($=C_j$) using  the ``augmented'' calibration set $\mathcal{T}^{\prime\prime}=\{(Y_i, \bnu_i,C_{i,j},Z_{i,j})\}_{i,j}$, for $i=1,\ldots,B^\prime$ and $j=1,\ldots,K$, where $K$ is the augmentation factor. See Algorithm \ref{alg:power_function} for details.

\begin{algorithm}[h!]
    \caption{\texttt{Learning the Rejection Probability Function}}\label{alg:power_function}
    \textbf{Input: }{\small test statistic $\lambda$; calibration data  $\mathcal{T}'=\{(Y_1, \bnu_1, \X_1),\ldots,(Y_{B^\prime}, \bnu_{B^\prime}, \X_{B^\prime})\}$; sampled cutoffs $G=\{C_1, \ldots, C_K \}$}\\
    \textbf{Output: }{\small Estimate of the rejection probability $W_\lambda(C;y, \bnu)$ for all $C \in G$, $y \in \{0,1\}$ and $\bnu \in \mathcal{N}$}	
    \begin{algorithmic}[1]
        \STATE \codecomment{Learn rejection probability from augmented calibration data $\mathcal{T}''$}
        \STATE Set $\mathcal{T}'' \gets \emptyset$
        \FOR{$i$ in $\{1,...,B^\prime\}$}		    
            \FOR{$j$ in $\{1,...,K\}$} 
                \STATE Compute $Y_{i,j} \gets \I \left(  \lambda(\X_i) \leq C_{j} \right)$
                \STATE Let $\mathcal{T}'' \gets  \mathcal\mathcal{T}''  \cup \{\left(Y_i, \bnu_i,C_{j}, Z_{i,j} \right)\}$ 
            \ENDFOR
        \ENDFOR  
        \STATE Estimate $ W_\lambda(C;y,\bnu) := \pr_{y,\bnu} \left(   \lambda(\X) \leq C \right)$ from $\mathcal{T}^{\prime\prime}$ via a regression of $Z$ on $Y$, $\bnu$ and $C$, which is monotonic in $C$. 
	\STATE \textbf{return} Estimated rejection probabilities $\widehat{W}_\lambda(C;y, \bnu)$, for $C \in G$, $y \in \{0,1\}$ and $\bnu \in \mathcal{N}$ 	
    \end{algorithmic}
\end{algorithm}

\section{Diagnostics of Estimated ROC Curves}\label{sec:roc_diag}

Here we describe how to evaluate goodness-of-fit of an estimate of the rejection probability function. This is inspired by methods that use the Probability Integral Transform (PIT) to assess conditional density estimators \citep{cook2006validation,freeman2017unified,izbicki2017photo,d2018photometric}.

If $W_{\lambda}(C; y, \bnu)=\pr_\target \left(  \lambda(\X) \leq C |y,\bnu\right)=F_{\lambda(\X)|y,\bnu}(C)$ is well estimated, then the random variable $
W_{\lambda}(\lambda(\X'); y, \bnu) \sim U(0,1)$, where $\X'$ is drawn from the simulator using $(y,\bnu)$ as parameters. This suggests we assess the performance of our estimator of $W$, $\widehat W$, via a P-P plot comparing $\widehat W(\lambda(\X_1); Y_1, \bnu_1),\ldots, \widehat W(\lambda(\X_B); Y_B, \bnu_B)$ to a Uniform(0,1) distribution, where $(\lambda(\X_1); Y_1, \bnu_1),\ldots,(\lambda(\X_B); Y_B, \bnu_B)$ denote an evaluation sample drawn from the simulator.  The distribution of these statistics can however be uniform even if $\widehat W$ is not a good estimate \citep[Theorem 1]{zhao2021diagnostics}. Here, we avoid this problem by 
dividing the parameter space ${\Theta}$ into bins and constructing separate distribution plots for samples within each bin.

\section{The Standard Bayes Classifier }\label{app:bayes_clf}

\begin{Lemma}[Bayes classifier] Let $h: \mathcal{X} \rightarrow \{0, 1\}$ be a classification rule. Define the weighted loss
\begin{equation}\label{eq:accuracy}
    W=c_1  I_{\{1\}}(Y) \; I_{\{0\}}(h(\X))+c_0 I_{\{0\}}(Y) \; I_{\{1\}}(h(\X)),
\end{equation}
where $c_k$ is the cost of mis-classifying a $Y=k$ observation, for $k=0, 1$. The  Bayes (that is, optimal) classifier that minimizes the error rate  $\E_{\target}(W)$ averaged over both $\X$ and $Y$ is given by
 \begin{equation}\label{eq:optimal_classifier2}
h^*(x)= 
\begin{cases}
     1 & \text{if  }  \P_\target(Y=1 | \x) > \alpha^*,\\
     0 & \text{if  }  \P_\target(Y=1 | \x) < \alpha^*,\\
    \text{arbitrary}  &\text{if  } \P_\target(Y=1 | \x) = \alpha^*,
\end{cases}  
 \end{equation}
where $\alpha^{*} := \frac{c_0}{c_0  + c_1}.$
\end{Lemma}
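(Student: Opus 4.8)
The plan is to use the classical pointwise-conditioning argument for Bayes-optimal decision rules. First I would invoke the tower property to write $\E_\target(W) = \E_\target\big[\E_\target(W \mid \X)\big]$. Since a classification rule $h$ can be prescribed independently at each point, and the conditional risk $r(\x) := \E_\target(W \mid \X = \x)$ depends on $h$ only through the single value $h(\x)$, minimizing $\E_\target(W)$ reduces to minimizing $r(\x)$ over $h(\x) \in \{0,1\}$ separately for (almost) every $\x \in \mathcal{X}$: any rule that, at each $\x$, sets $h(\x)$ to a minimizer of $r(\x)$ also minimizes the integrated risk, because $r(\x)\geq 0$ and the global risk is the integral of $r$ against the marginal law of $\X$.

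Next I would compute $r(\x)$ explicitly. Because $h(\x) \in \{0,1\}$ is deterministic given $\x$, exactly one of $I_{\{0\}}(h(\x))$ and $I_{\{1\}}(h(\x))$ equals $1$; taking the conditional expectation over $Y \mid \X = \x$ of the loss in Equation~\ref{eq:accuracy} gives
\[
r(\x) = c_1\, \P_\target(Y=1 \mid \x)\, I_{\{0\}}(h(\x)) \;+\; c_0\, \P_\target(Y=0 \mid \x)\, I_{\{1\}}(h(\x)).
\]
Thus $r(\x) = c_1\,\P_\target(Y=1\mid\x)$ if the rule predicts $0$ at $\x$, and $r(\x) = c_0\big(1 - \P_\target(Y=1\mid\x)\big)$ if it predicts $1$.

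Then I would simply compare these two candidate values. Predicting $1$ is strictly preferable exactly when $c_0\big(1 - \P_\target(Y=1\mid\x)\big) < c_1\,\P_\target(Y=1\mid\x)$, which rearranges to $\P_\target(Y=1\mid\x) > c_0/(c_0+c_1) = \alpha^*$; symmetrically, predicting $0$ is strictly preferable when $\P_\target(Y=1\mid\x) < \alpha^*$; and the two conditional risks coincide when $\P_\target(Y=1\mid\x) = \alpha^*$, so either label is optimal there. This is precisely the rule $h^*$ of Equation~\ref{eq:optimal_classifier2}, which therefore minimizes $r(\x)$ pointwise and hence minimizes $\E_\target(W)$.

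This is a textbook fact, so there is no genuine obstacle; the only points deserving a sentence of care are (i) the reduction from the integrated risk to the pointwise risk, which uses the freedom to choose $h(\x)$ separately at each $\x$ together with a brief measurable-selection remark so that the resulting $h^*$ is a legitimate (measurable) classification rule, and (ii) noting that on the tie set $\{\x : \P_\target(Y=1\mid\x) = \alpha^*\}$ every measurable choice yields the same conditional risk, which is why $h^*$ is left arbitrary there.
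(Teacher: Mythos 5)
Your proof is correct: the paper states this lemma without proof (it is the classical weighted Bayes-risk result), and your pointwise-conditioning argument --- computing $\E_{\target}(W\mid \X=\x)$ for each choice of $h(\x)\in\{0,1\}$ and comparing $c_1\,\P_\target(Y=1\mid\x)$ against $c_0\,(1-\P_\target(Y=1\mid\x))$ --- is exactly the standard derivation the paper implicitly relies on, including the correct handling of the tie set. Nothing is missing; the measurability remark and the reduction from integrated to conditional risk are the only points needing care, and you address both.
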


\begin{Remark}[Balanced accuracy] If there is no shift between the train and target sets, a common choice for the loss (\ref{eq:accuracy}) is $c_1=1/\P_\train(Y=1)$ and $c_0=1/\P_\train(Y=0)$. This yields the balanced error rate 
$$\E_\train(W) = \P_\train(h(\X) = 0|Y = 1)+ \P_\train(h(\X) = 1|Y = 0)$$ and the cut-off $\alpha^* = \P_\train(Y=1)$ for the Bayes classifier (Equation~\ref{eq:optimal_classifier2}).
\end{Remark}

\begin{Remark}[Bayes classifier under GLS] Under GLS, there is no monotonic relationship between $\P_\target \left(Y=1|\x\right)$ and $\P_\train \left(Y=1|\x\right)$. Thus, it is not possible to use $\P_\train \left(Y=1|\x\right)$ to recover $\P_\target \left(Y=1|\x\right)$ using standard label shift corrections \citep{saerens2002adjusting,lipton2018detecting}. 
\end{Remark}

\begin{Remark}[Bayes classifier under the presence of nuisance parameters but no GLS]
If there is no GLS, $\P_\train \left(Y=1|\x\right) = \P_\target \left(Y=1|\x\right)$. However, without a nuisance-aware cutoff, the Bayes classifier is usually calibrated to control type-I error marginally over $\bnu$. NACS instead controls this error for all $\bnu \in \mathcal{N}$.
\end{Remark}

\section{Additional Results and Details on Cosmic Ray Experiment}\label{app:cosmic_rays}

\subsection{Experimental Set-Up  with Ground-Based Detector Arrays}

The data used in this paper are generated via the CORSIKA cosmic ray simulator \citep{heck1998corsika}. CORSIKA is a Monte Carlo simulation program that models the interactions of primary cosmic rays with the Earth's atmosphere. Given values of the parameters $\mu, E, Z, A$, which define the primary cosmic ray identity, energy, zenith and azimuth angle, respectively, CORSIKA outputs the identities, momenta, positions, and arrival times of all secondary particles generated in the atmospheric shower, that eventually reach the ground and that are mostly muons, electrons and photons at gamma-ray energies with minor abundance of heavier particles. 

The measured data $\x$ in our analysis does not incorporate the full shower footprint, 
as this level of information cannot be captured in any realistic scenario. Instead, we simulate a simple $6\times 6$ detector grid, where each detector covers a $2\times 2$~m$^2$ area, with 48~m detector spacing. Information for a secondary particle of a particular shower footprint is incorporated into the analysis only if that secondary particle lands within the area of a detector. See Figure \ref*{fig:swgo} (right) for a simplified representation of the detector grid.

We assume 100\% detector efficiency and that all secondary particles types are detectable. We also assume that showers always originate at the center of the detector grid. Finally, we assume that both the zenith and azimuth angles $Z$ and $A$ are known due to the relative ease with which they can be estimate from observed footprint data. Thus, our only nuisance parameter for inference on $\mu$ is the energy $E$ of the cosmic ray.

\begin{figure}[t!]
    \centering
    \begin{tabular}{c| c}
        \includegraphics[width=0.45\textwidth]{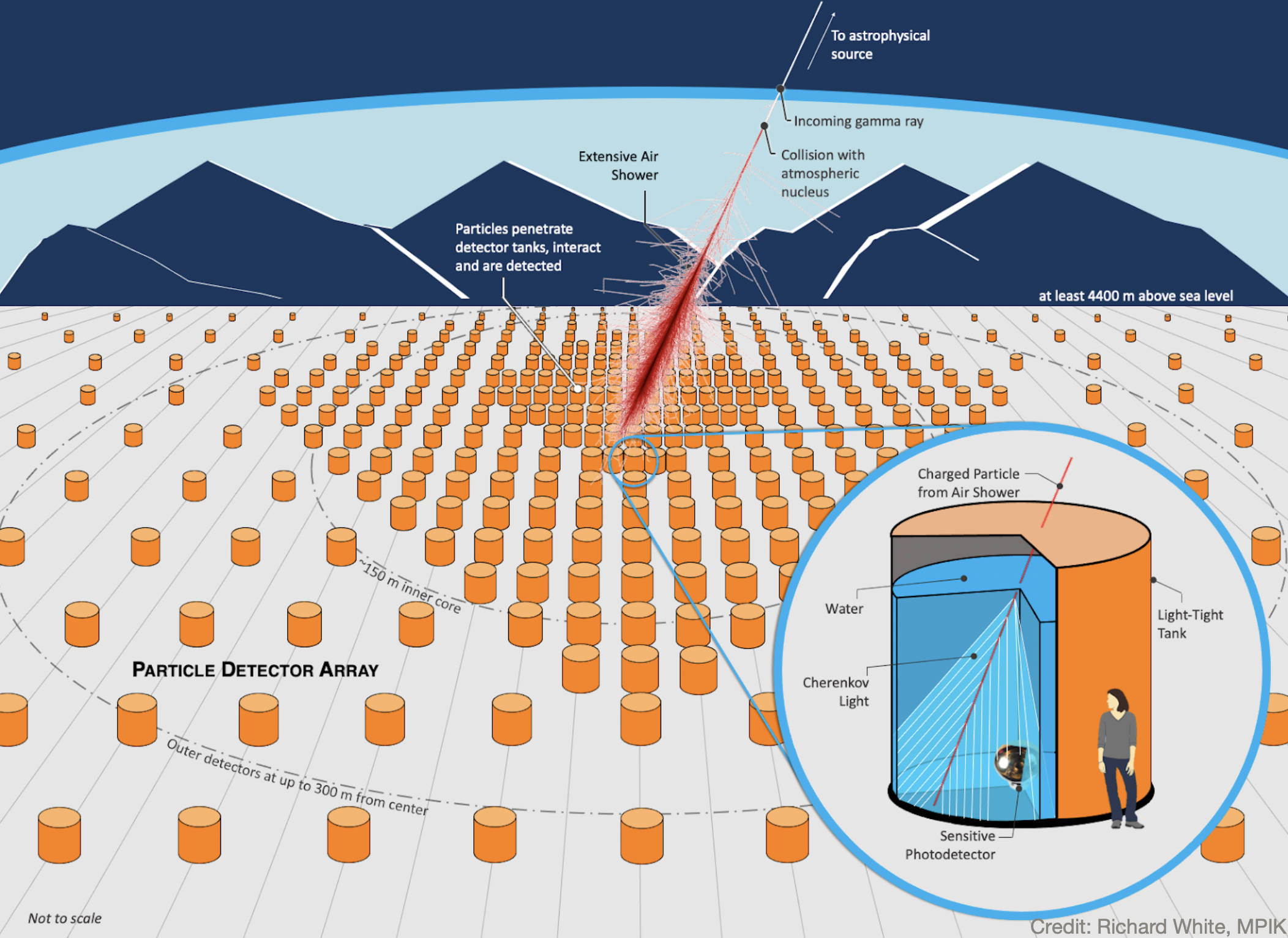} &  
        \includegraphics[width=0.5\textwidth]{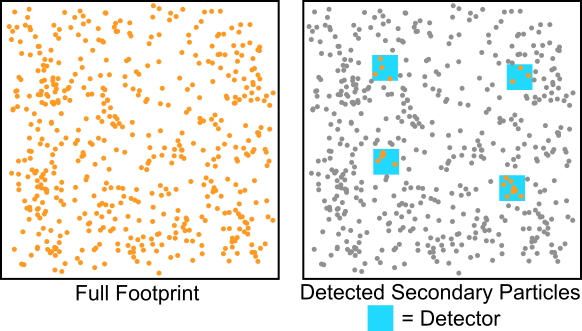}
    \end{tabular}   
    \caption{{\em Left:} Artistic representation of the SWGO array. The inlay shows the individual detector unit. {\em Right:} Although we have access to all secondary particles in our simulated cosmic ray showers, we only include the particles that hit our simulated detector setup (blue rectangles) in the analysis. This layout pictured here is an illustrative example.}
    \label{fig:swgo}
\end{figure}

The data used to estimate the test statistic are drawn according to the following distribution (which may be different from that of actual astrophysical sources):
\begin{enumerate}
  \item Gamma ray to Hadron ratio 1:1 
  (whereas actual observed ratios are in the range 1:1,000 -- 1:100,000)
  \item Energy between 100 TeV and 10 PeV, with probability density proportional to $E^{-1}$ for gamma rays and $E^{-2}$ for hadrons 
  (with standard astrophysical sources closer to between -2:-4)
  \item Zenith uniformly distributed between 0 and 65 degrees 
  \item Azimuth uniformly distributed between -180 and 180 degrees
\end{enumerate}

To derive $\x_i$, we first define four secondary particle groups: photons (neutral); electrons and positrons; muons (charged);
and all other secondary particle types. Then for each simulated detector, we record the count of particles in each group that hit the detector. This results in a vector of length $4 \cdot 36 = 144$ for each primary cosmic ray that represents the detector data. We construct $\x_i$ by concatenating the detector data with $Z_i$ and $A_i$.

For the calibration and test sets, we use the same reference distribution. 

\subsection{Details on the algorithms used in Section~\ref{sec:cosmic_rays}}
We used gradient boosting probabilistic classifiers as implemented in \texttt{CatBoost} \citep{prokhorenkova2018catboost} to estimate both $\mathbb{P}(Y|\X)$ and $W_\lambda(C;y, \bnu)$. For the latter, \texttt{CatBoost} allows to easily enforce monotonicity constraints on the features, which we used on $C$. To compute cutoffs, we used the \texttt{brentq} routine \citep{brent2013algorithms} to calculate the inverse and the differential evolution global optimization algorithm \citep{storn1997differential} to find the infimum. Both are implemented in \texttt{SciPy} \citep{virtanen2020scipy}. To obtain confidence sets for $\bnu$, we used the method developed by \citet{masserano2023simulator} with a masked autoregressive flow \citep{papamakarios2017masked} since it guarantees that the constructed region contains the true value of $\bnu$ at the desired confidence level for all $\bnu \in \Ncal$.

\subsection{Additional Results}

Figures \ref*{fig:cr_within_ph} and \ref*{fig:cr_within_h} mirror the results for \ref*{fig:cosmic_proportions_gamma}, focusing on cosmic rays predicted to be hadrons and true hadron cosmic rays repectively. Identifying hadrons is of lesser scientific value than identifying gamma rays, so the results here are presented mainly for reference.

\begin{figure}[t!]
    \centering
    \includegraphics[width=0.68\textwidth]{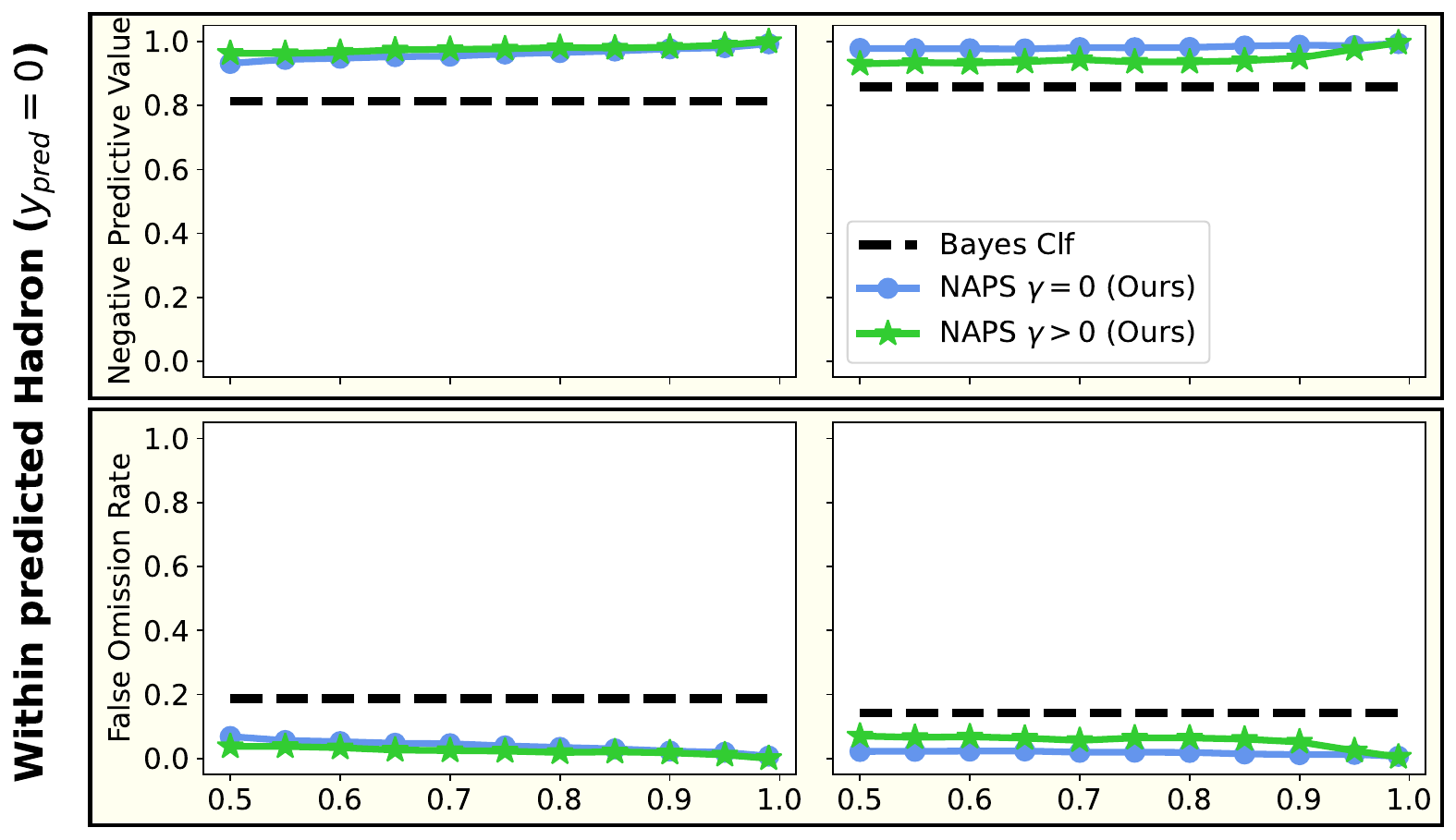}
    \caption{\textbf{Classification metrics within predicted Hadrons ($y_{\text{pred}}=0$).} Results are binned according to whether the shower energy is below (left) or above (right) the median value. Nuisance-aware prediction sets (NAPS $\gamma=0$; \textcolor{NavyBlue}{blue}) achieve high precision and low false discovery rates (FDR), especially at high confidence levels. In addition, by constraining the nuisance parameters $\bnu = (E, A, Z)$, we see performance (NAPS $\gamma>0$; \textcolor{LimeGreen}{green}) increase in the lower energy bin but with a corresponding tradeoff in the higher energy bins. Both approaches yield better results relative to the oracle Bayes classifier (black dashed line). }
    \label{fig:cr_within_ph}
\end{figure}

\begin{figure}[b!]
    \centering
    \includegraphics[width=0.68\textwidth]{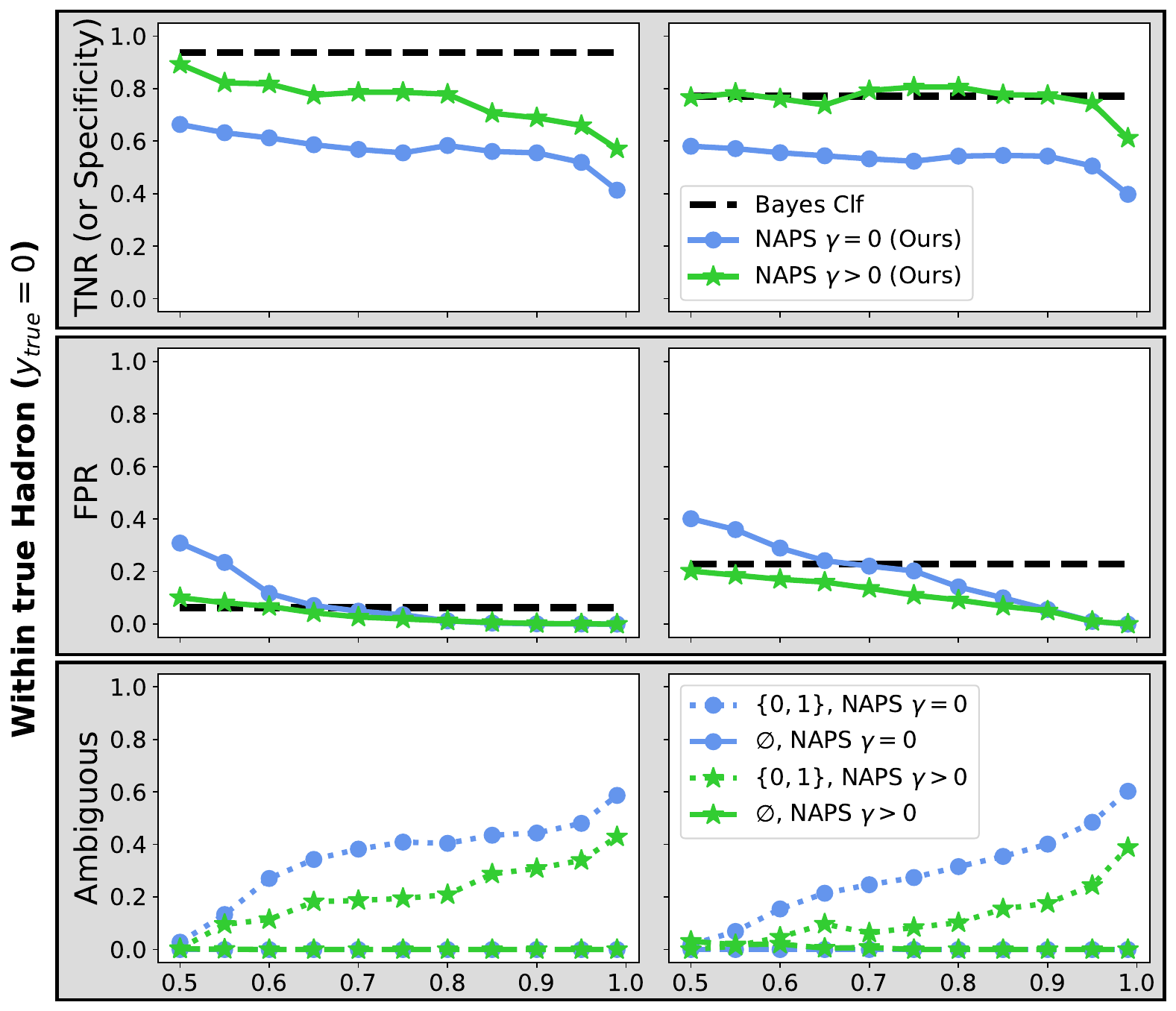}
    \caption{\textbf{Classification metrics within true Hadrons ($y=0$)}. Results are binned according to whether the shower energy is below (left) or above (right) the median value. Our set-valued classifier makes explicit its level of uncertainty on the label $y$ by returning ambiguous prediction sets (bottom row) for hard-to-classify $\x_{\target}$. Even so, NAPS with $\gamma>0$ is able to achieve a comparable number true negatives in the higher energy bins and lower number of false positives in both energy bins relative to the Bayes classifier. Here $\gamma = \alpha \times 0.3$}
    \label{fig:cr_within_h}
\end{figure}

\section{Additional Results and Details on the RNA Sequencing experiment}\label{app:rna_seq}

\subsection{Data Simulation Procedure}
The \texttt{scDesign3} simulator for RNA-Seq constructs a new simulated dataset through the following steps
\begin{enumerate}
    \item The user chooses a model type (e.g. linear with Gaussian noise) and specification to model the relationship between cell gene counts and cell features.
    \item \texttt{scDesign3} estimates model parameters on the reference data.
    \item The user supplies a matrix of all features of all cells in for the new simulated data.
    \item \texttt{scDesign3} outputs the gene counts for these cells by sampling from the estimated model.
\end{enumerate}

In our paper, we use a negative binomial GLM with cell type and batch protocol indicator as the only features: 
\[\log\E[X_{i,j} \mid Y_{j}, B_{j}] = \alpha_i + \beta_i Y_{j} + \mathbf{\gamma_i} \mathbf{B_{j}},\]
where
\begin{enumerate}
    \item $X_{i,j}$ are the observed counts for gene $i$ for cell $j$
    \item $Y_{j} \in \{0, 1\}$ is the cell type for cell $j$, \texttt{CD4\textsuperscript{+}} T-cells (Y = 1) or \texttt{Cytotoxic} T-cells (Y = 0) 
    \item $\mathbf{B}_j$ is which of the 4 protocols was used to process cell $j$, with a separate model coefficient for each protocol excluding the baseline (represented by the vector $\mathbf{\gamma}_i \in \mathbb{R}^3$)
\end{enumerate}

We also restrict our analysis to 100 genes chosen randomly from the approximately 6000 genes in the reference dataset. Although each gene count receives its own set of model parameters, new gene counts are generated in a way that captures the correlation between gene counts in the reference data. See \citep{song2023scdesign3} for more details.

The reference data used in our analysis contains two experimental protocols. One is used as a baseline to derive $\hat \alpha_i$. The second is used to fit the first entry of each $\hat{\mathbf{\gamma}}_i$, denoted $\hat\gamma_{i, 1}$. The last two entries $\hat\gamma_{i, 2}$ and $\hat\gamma_{i, 3}$ are constructed in this way:

\begin{enumerate}
    \item Each $\hat\gamma_{i, 2}$ is sampled with replacement from $\{\hat\gamma_{i, 1} : |\hat\gamma_{i, 1}| < \text{median}(\{|\hat\gamma_{j, 1}|, j \in [100]\})\}$
    \item Each $\hat\gamma_{i, 3}$ is sampled with replacement from $\{\hat\gamma_{i, 1} : |\hat\gamma_{i, 1}| \geq \text{median}(\{|\hat\gamma_{j, 1}|, j \in [100]\})\}$
\end{enumerate}

These last two batch protocols are meant to emulate a weak and stronger batch effect respectively than the different between the two original experimental protocols, while keeping realistic estimates for the effects on gene counts.

\subsection{Details on the algorithms used in Section~\ref{sec:rna_seq}}

We used gradient boosting probabilistic classifiers as implemented in \texttt{CatBoost} \citep{prokhorenkova2018catboost} to estimate both $\mathbb{P}(Y|\X)$ and $W_\lambda(C;y, \bnu)$. For the latter, \texttt{CatBoost} allows to easily enforce monotonicity constraints on the features, which we used on $C$. To compute cutoffs, we used the \texttt{brentq} routine \citep{brent2013algorithms} to calculate the inverse and the differential evolution global optimization algorithm \citep{storn1997differential} to find the infimum. Both are implemented in \texttt{SciPy} \citep{virtanen2020scipy}. The three baselines against which we compare NAPS were computed from the same base probabilistic classifier (also used for NAPS). After training it, we calibrated it on the same set used for NAPS via isotonic regression, but only for the baselines (our method has a separate calibration procedure as described in Section~\ref{sec:method}. Then we computed cutoffs as described in \citet{sadinle2019least, romano2020classification}.

\subsection{Additional Results}
Taking \texttt{CD4\textsuperscript{+}} T-cells (Y = 1) to be the positive class, Figures \ref*{fig:rna_wpp}, \ref*{fig:rna_wtp}, \ref*{fig:rna_wpn}, \ref*{fig:rna_wtn} show various performance metrics for four prediction set methodologies: standard prediction sets \citep[Theorem 1]{sadinle2019least}, class-conditional prediction sets \citep{sadinle2019least}, conformal adaptive prediction sets (APS; \citet{romano2020classification}), and NAPS with $\gamma = 0$. For many of the metrics like precision and NPV, each method achieves very good performance (perhaps due to the ease of the underlying inference problem). For TPR, we see that each method has differing strength for each of the protocols. We also notice that at very high levels of confidence, conformal APS starts outputting $\{0, 1\}$ for every observation, leading to a sharp drop in performance across all metrics.

\begin{figure}[t!]
    \centering
    \includegraphics[width=0.8\textwidth]{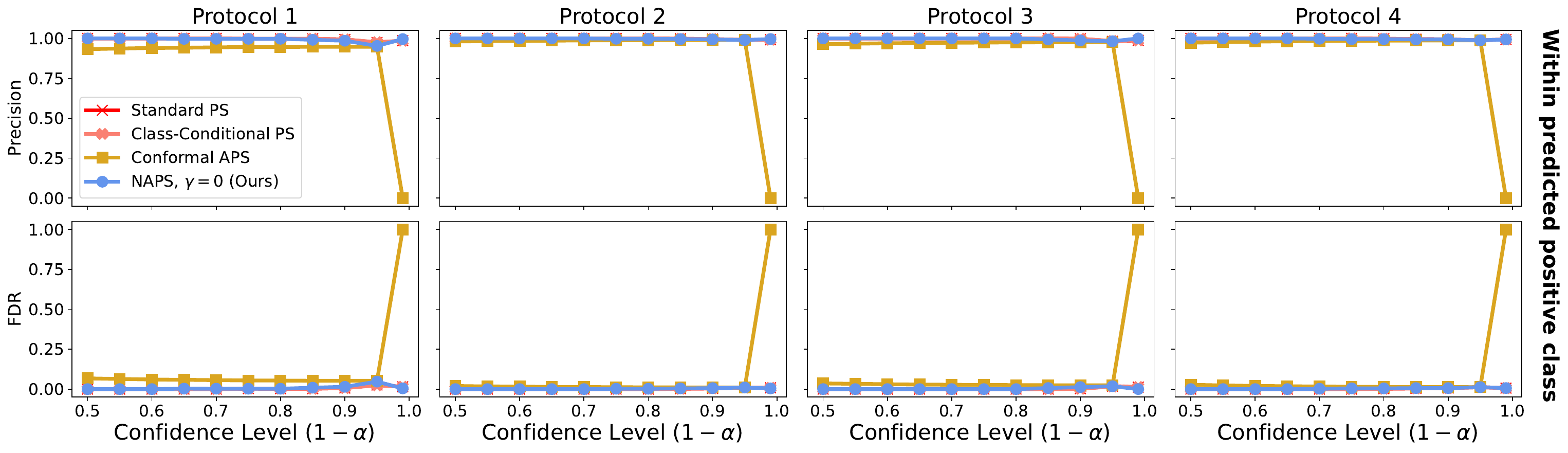}
    \caption{\textbf{Classification metrics within predicted positive class}: Precision (top) and FDR (bottom) for observations predicted to be \texttt{CD4\textsuperscript{+}} T-cells (i.e. prediction set output is $\{1\}$), additionally separated by protocol (columns). Metrics are shown for nuisance-aware prediction sets (NAPS $\gamma=0$; \textcolor{NavyBlue}{blue}), standard prediction sets (\textcolor{BrickRed}{red}), class-conditional prediction sets (\textcolor{Melon}{pink}), and conformal adaptive prediction sets (APS) (\textcolor{YellowOrange}{gold}). At high levels of confidence, conformal APS outputs $\{0, 1\}$ for all points in the test set; the corresponding metrics that require the prediction set to have one element have been set to their worst-case value.}
    \label{fig:rna_wpp}
\end{figure}

\begin{figure}[t!]
    \centering
    \includegraphics[width=0.8\textwidth]{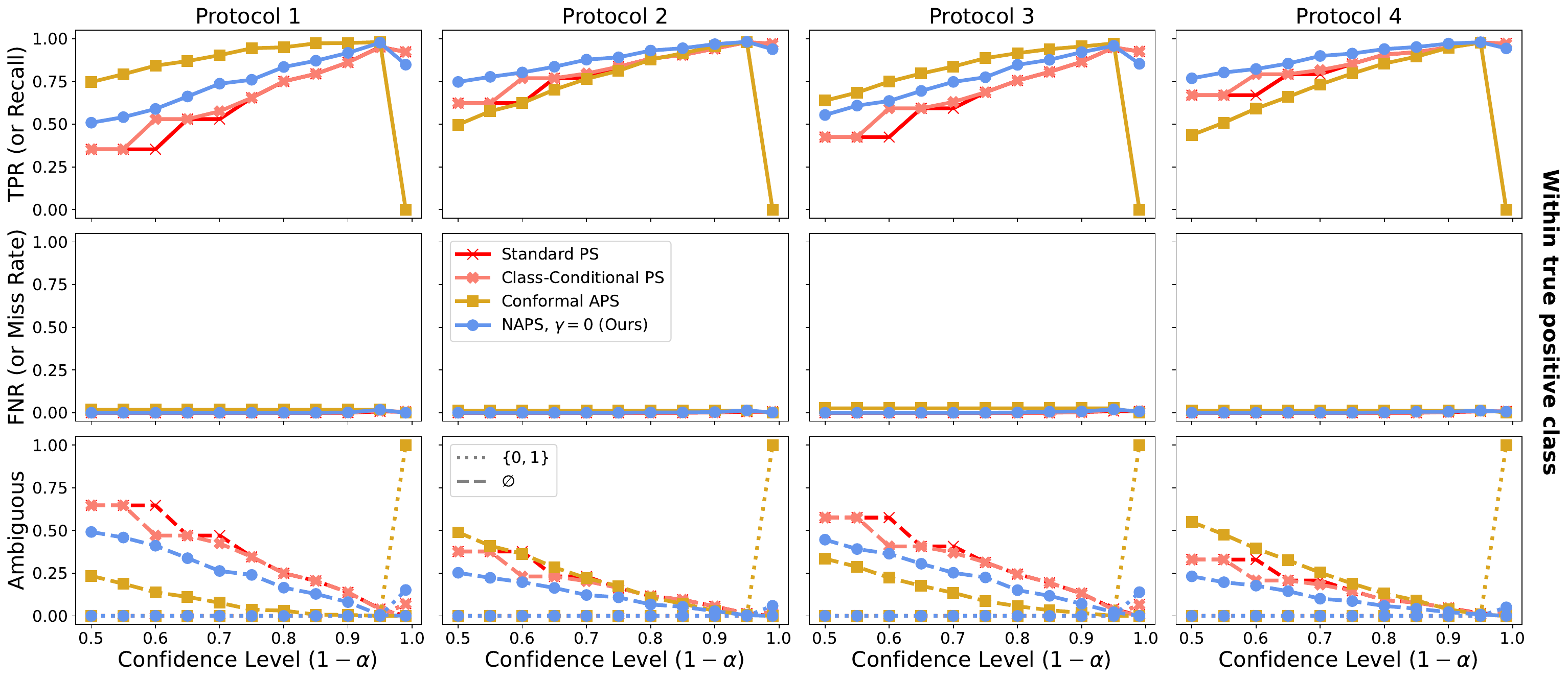}
    \caption{\textbf{Classification metrics within true positive class}: TPR (top), FNR (middle) and proportion of ambiguous sets (bottom) for true \texttt{CD4\textsuperscript{+}} T-cells, additionally separated by protocol (columns). Metrics are shown for Nuisance-aware prediction sets (NAPS $\gamma=0$; \textcolor{NavyBlue}{blue}), standard prediction sets (\textcolor{BrickRed}{red}), class-conditional prediction sets (\textcolor{Melon}{pink}), and conformal adaptive prediction sets (APS) (\textcolor{YellowOrange}{gold}). At high levels of confidence, conformal APS outputs $\{0, 1\}$ for all points in the test set; the corresponding metrics that require the prediction set to have one element have been set to their worst-case value.}
    \label{fig:rna_wtp}
\end{figure}

\begin{figure}[h!]
    \centering
    \includegraphics[width=0.8\textwidth]{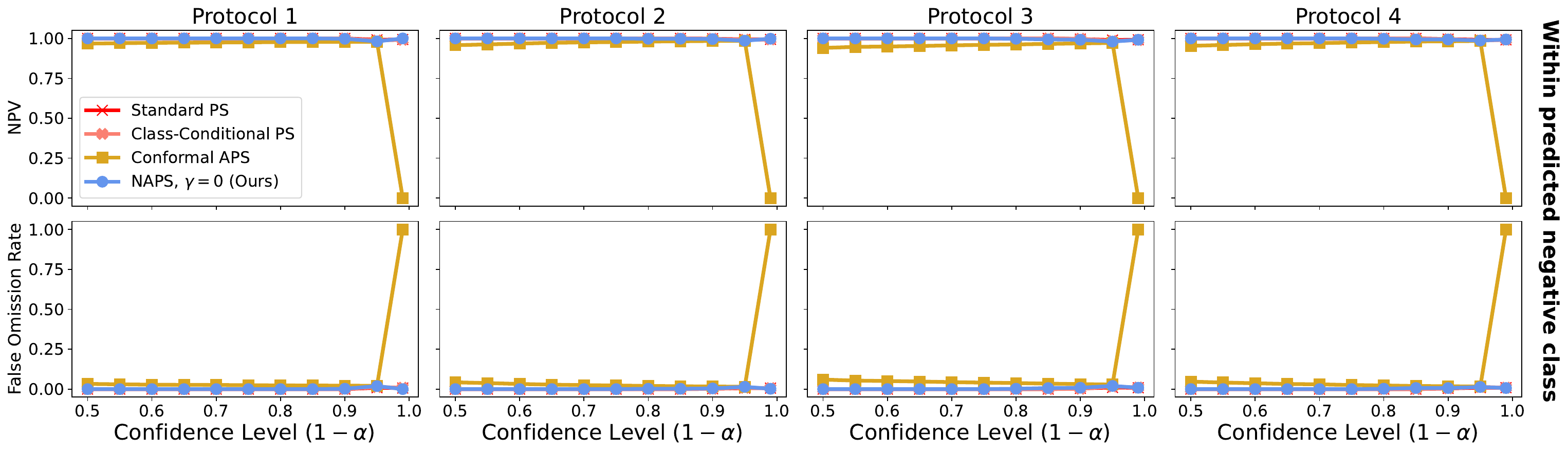}
    \caption{\textbf{Classification metrics within predicted negative class}: NPV (top) and False Omission Rate (bottom) for observations predicted to be \texttt{Cytotoxic} T-cells (i.e. prediction set output is $\{0\}$), additionally separated by protocol (columns). Metrics are shown for Nuisance-aware prediction sets (NAPS $\gamma=0$; \textcolor{NavyBlue}{blue}), standard prediction sets (\textcolor{BrickRed}{red}), class-conditional prediction sets (\textcolor{Melon}{pink}), and conformal adaptive prediction sets (APS) (\textcolor{YellowOrange}{gold}). At high levels of confidence, conformal APS outputs $\{0, 1\}$ for all points in the test set; the corresponding metrics that require the prediction set to have one element have been set to their worst-case value.}
    \label{fig:rna_wpn}
\end{figure}

\begin{figure}[t!]
    \centering
    \includegraphics[width=0.8\textwidth]{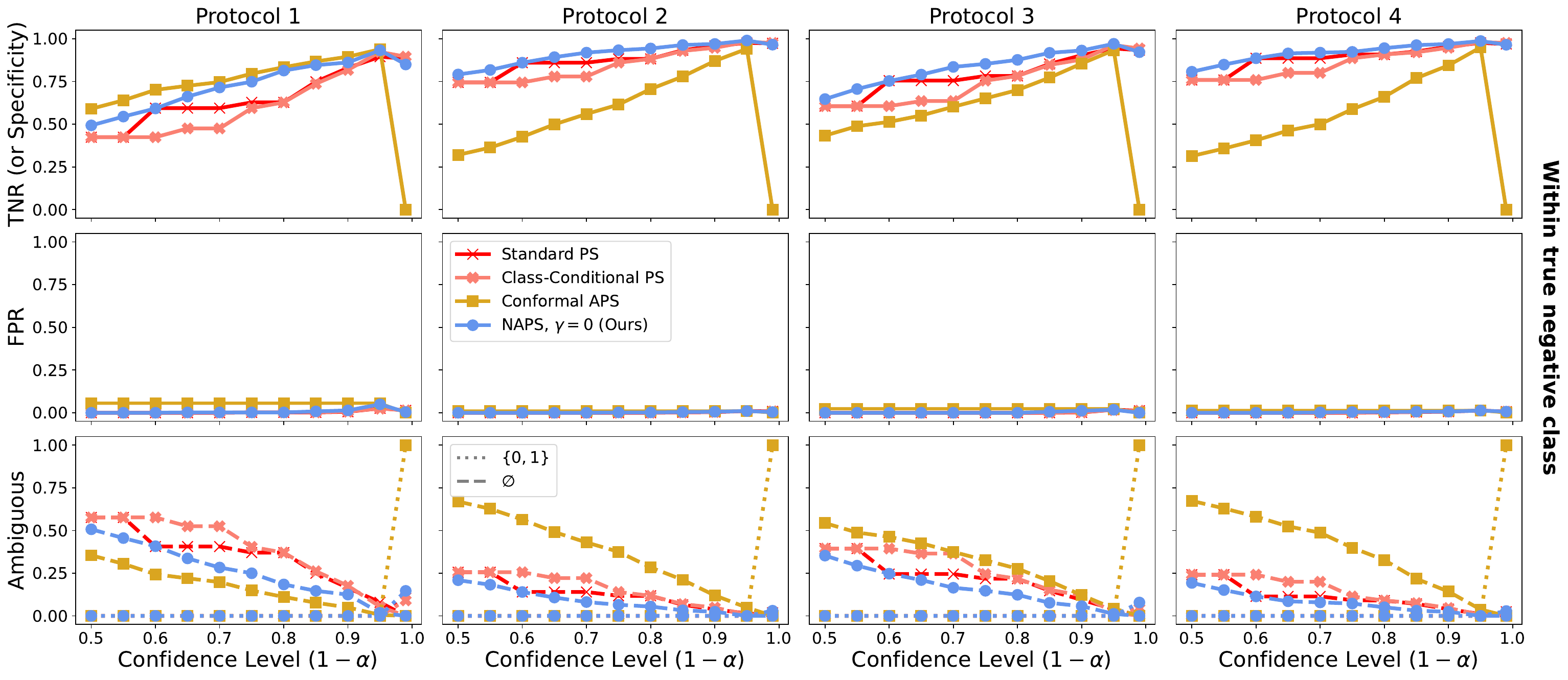}
    \caption{\textbf{Classification metrics within true negative class}: TNR (top), FPR (middle) and proportion of ambiguous sets (bottom) for true \texttt{Cytotoxic} T-cells, additionally separated by protocol (columns). Metrics are shown for Nuisance-aware prediction sets (NAPS $\gamma=0$; \textcolor{NavyBlue}{blue}), standard prediction sets (\textcolor{BrickRed}{red}), class-conditional prediction sets (\textcolor{Melon}{pink}), and conformal adaptive prediction sets (APS) (\textcolor{YellowOrange}{gold}). At high levels of confidence, conformal APS outputs $\{0, 1\}$ for all points in the test set; the corresponding metrics that require the prediction set to have one element have been set to their worst-case value.}
    \label{fig:rna_wtn}
\end{figure}

\section{\add{Computational Analysis: Training and Inference Times}}
\label{sec:comp_times}

\add{Table~\ref{tab:comp_times} reports training and inference times for NAPS under the Single-Cell RNA Sequencing (Section~\ref{sec:rna_seq}) and Atmospheric Cosmic-Ray Showers (Section~\ref{sec:cosmic_rays}) experiments. Dataset sizes are the proportions included in the training, calibration and inference sets out of the total number of simulations indicated in Sections~\ref{sec:rna_seq} and Section~\ref{sec:cosmic_rays}. For calibration, we report the time needed to estimate ROC curves from the augmented calibration set, including “re-calibration” of the estimated rejection probabilities via isotonic regression. For NAPS with $\gamma > 0$ (only performed in Section 5.3), inference times are measured per-observation (on average) since cutoffs are data-dependent and need to be computed for each $\mathbf{x}$. For NAPS with $\gamma = 0$, we report the total time needed to compute cutoffs, as they can then be applied to any new observation $\mathbf{x}$ (i.e., they are amortized with respect to observations). Once this is done, constructing the prediction sets takes only a few milliseconds. All times are computed for inference at a single level $\alpha$. Classifier training and the calibration procedure only need to be estimated once (here we report times that include five-fold cross-validation). All computations were performed on a MacBook Pro M1Pro with 16 GB of RAM.}

\begin{table}[b!]
\caption{\add{Training and inference times for NAPS for the experiments of Sections~\ref{sec:rna_seq} and ~\ref{sec:cosmic_rays}.}}
\label{tab:comp_times}
\vskip 0.15in
\begin{center}
\begin{small}
\begin{sc}
\begin{tabular}{lcccccr}
\toprule
\textbf{Experiment} & \textbf{Dataset Size} & \textbf{Training} & \textbf{Calibration} & \textbf{Inference} ($\gamma = 0$) & \textbf{Inference} ($\gamma > 0$) \\
\midrule
RNA-Seq & $0.6, 0.35, 0.5$ & 6 minutes & 30 minutes & 1 second & / \\
Cosmic Rays & $0.45, 0.45, 0.1$ & 8 minutes & 65 minutes & 6 seconds & 4 seconds per-obs\\
\bottomrule
\end{tabular}
\end{sc}
\end{small}
\end{center}
\vskip -0.1in
\end{table}

\section{Synthetic Example: Deep Dive} 
\label{sec:deep-dive}

\subsection{Impact of the Nuisance Parameter}
As mentioned in the main text, we consider a process that generates events $(Y_i, X_i)$, where $Y_i \in \{0, 1\}$ determines the type or label of the event, and $X_i \in [0, 1]$ is the sole feature of the event. The distribution of events is defined as follows

\begin{enumerate}
    \item $\P[Y_i = 0] = \P[Y_i = 1] = 1/2$
    \item Conditional density for $Y = 1$: $\displaystyle p(\x_i \mid Y_i = 1) = \frac{e^{\x_i}}{e-1}$
    \item Conditional density for $Y = 0$: $\displaystyle p(\x_i \mid Y_i = 0, \nu_i) = \frac{\nu_i e^{-\nu_i \x_i}}{1-e^{-\nu_i}}$
\end{enumerate}

Where $\nu$ is an additional nuisance parameter that influences the density of $X$ for $Y = 0$ events. $\nu_i$ is assumed to be drawn from some distribution independently for each $Y = 0$ event. We are interested in inferring $Y$ given observed $\X$ and unobserved $\nu$. Figure \ref{fig:synth_overview} shows how the presence of the nuisance parameter affects this inference task.

\begin{figure}[t!]
    \centering
    \includegraphics[width=0.8\textwidth]{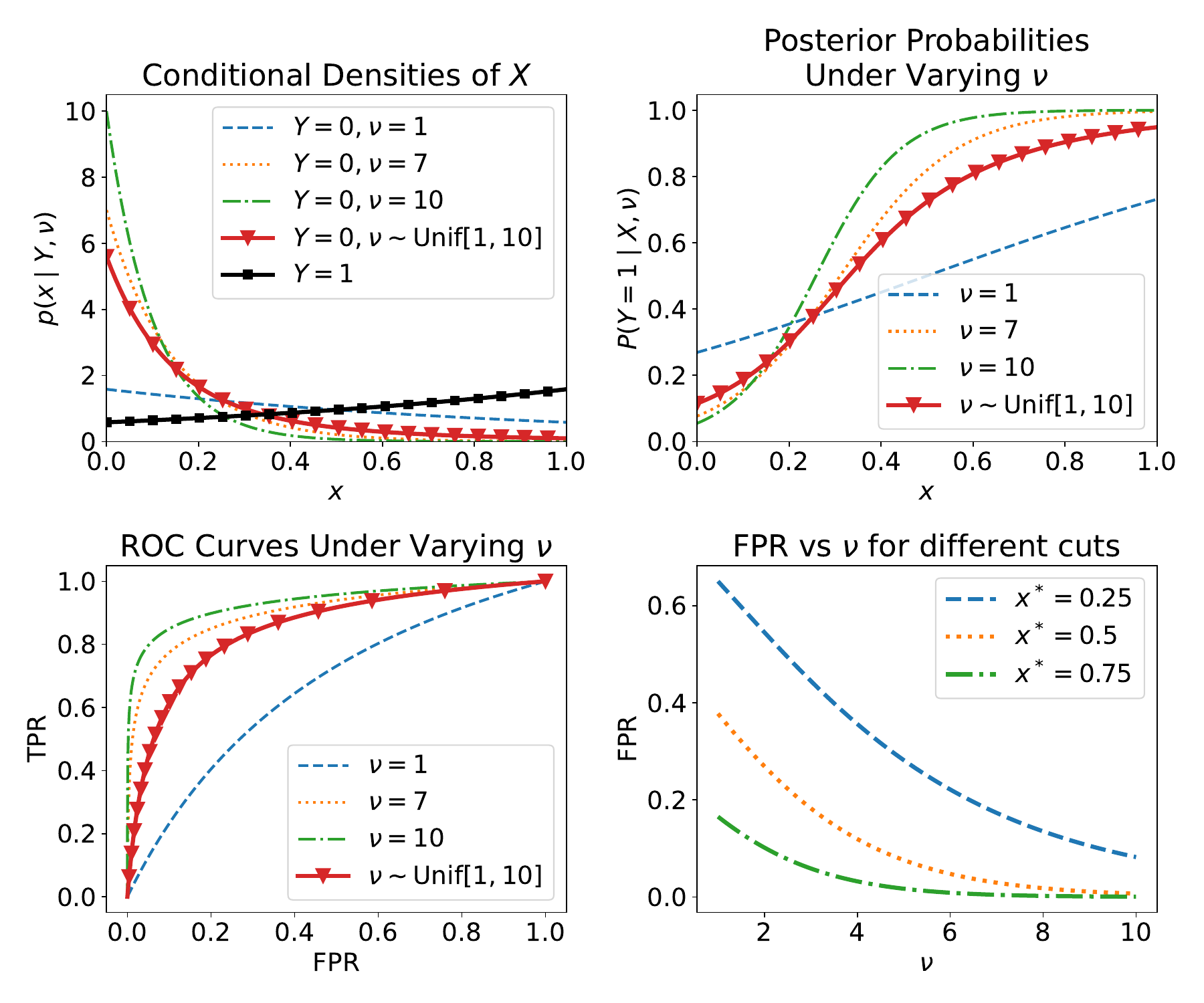}
    \caption{\textbf{Impacts of Nuisance Parameters on the Inference Task} \emph{Top Left}: Conditional densities $p(x \mid Y, \nu)$ for various values of $Y$ and $\nu$ according to the problem setup. The marginal density $p(x \mid Y = 0)$  shown in red is induced by a Unif$[1, 10]$ prior on $\nu$. \emph{Top Right}: Posterior probability $P(Y = 1 \mid X, \nu)$ as a function of $X$ for different values of the nuisance parameter $\nu$. The marginal posterior $P(Y = 1 \mid X)$ is shown in red for a Unif$[1, 10]$ prior on $\nu$. \emph{Bottom Left}: ROC curves for the Bayes Classifier holding $\nu$ fixed (blue, orange, and green curves) and for a Unif$[1, 10]$ prior on $\nu$ (red). $Y = 1$ is taken to be the positive class. \emph{Bottom Right}: Under the classification rule that $\hat y_i = 1$ if $x_i > x^*$, this figure shows how the FPR of that classifier will vary with $\nu$. Each curve represents a different cut $x^*$ for the classification rule. }
    \label{fig:synth_overview}
\end{figure}

The top left of Figure \ref{fig:synth_overview} demonstrates how the shape of the density of $X$ for $Y = 0$ events can vary dramatically depending on the value of $\nu$. Assuming any prior of $\nu$ can yield a density of $X$ that does not depend on $\nu$, but it may not closely resemble the conditional densities of $X$ given $\nu$ for all values of $\nu$. The top right panel shows how this variation in the shape of the densities subsequently affects the behavior of the posterior probabilities of $Y$ given $X$ and $\nu$. Again, we can derive a posterior that does not depend on $\nu$, with the same caveat as before. We also observe that the posterior probabilities are always monotonic in $x$, therefore any classifier or prediction set that uses cutoffs on posterior probabilities can be equivalently defined using cutoffs on $x$ directly. The bottom left figure shows how the ROC for the Bayes Classifier (i.e. directly using the posterior probabilities to classify events) can vary under fixed $\nu$ or a prior on $\nu$. These ROC curves demonstrate why ignoring nuisance parameters can yield biased or otherwise unreliable results. Every fixed value of $\nu$ as well as every prior on $\nu$ yields a completely different relationship between FPR and TPR. The bottom right figure shows that if our goal is valid FPR control for our inference task, we must take the nuisance parameter into account. Because the ultimate FPR for any cutoff depends on the value of $\nu$ for each observation, the selection of an cutoff that controls FPR must properly account for the influence of the nuisance parameter.

\subsection{Additional Results}

Figures \ref{fig:synth_covdeep}, \ref{fig:synth_powerdeep}, and \ref{fig:synth_precisiondeep} show additional results from the synthetic examples for both standard prediction sets and class-specific prediction sets used in the cosmic ray application. All prediction sets are formed under the training prior $\nu \sim \text{Unif}[1, 10]$, which is the same prior used to compute metrics under the ``No GLS'' setting. ``With GLS'' changes the target prior to $\nu \sim \mathcal{N}(4, 0.1)$ without modifying the training prior. Coverage for $Y = 1$ events, power for $Y = 0$ events (defined as $\P[1 \notin \text{Prediction Set} \mid Y = 0]$), and precision for $\{0\}$ outputs do not vary significantly across methodologies due to the fact that $p(\x_i \mid Y_i = 1)$ does not depend on the distribution of $\nu_i$. As seen in the text, our methods achieve validity regardless of the presence of GLS. We also achieve higher precision than standard or class-specific prediction sets, although we do sacrifice power compared to those methods. However, careful selection of $\gamma$ in the NAPS framework can help increase power without losing validity.

\begin{figure}[t!]
    \centering
    \includegraphics[width=0.8\textwidth]{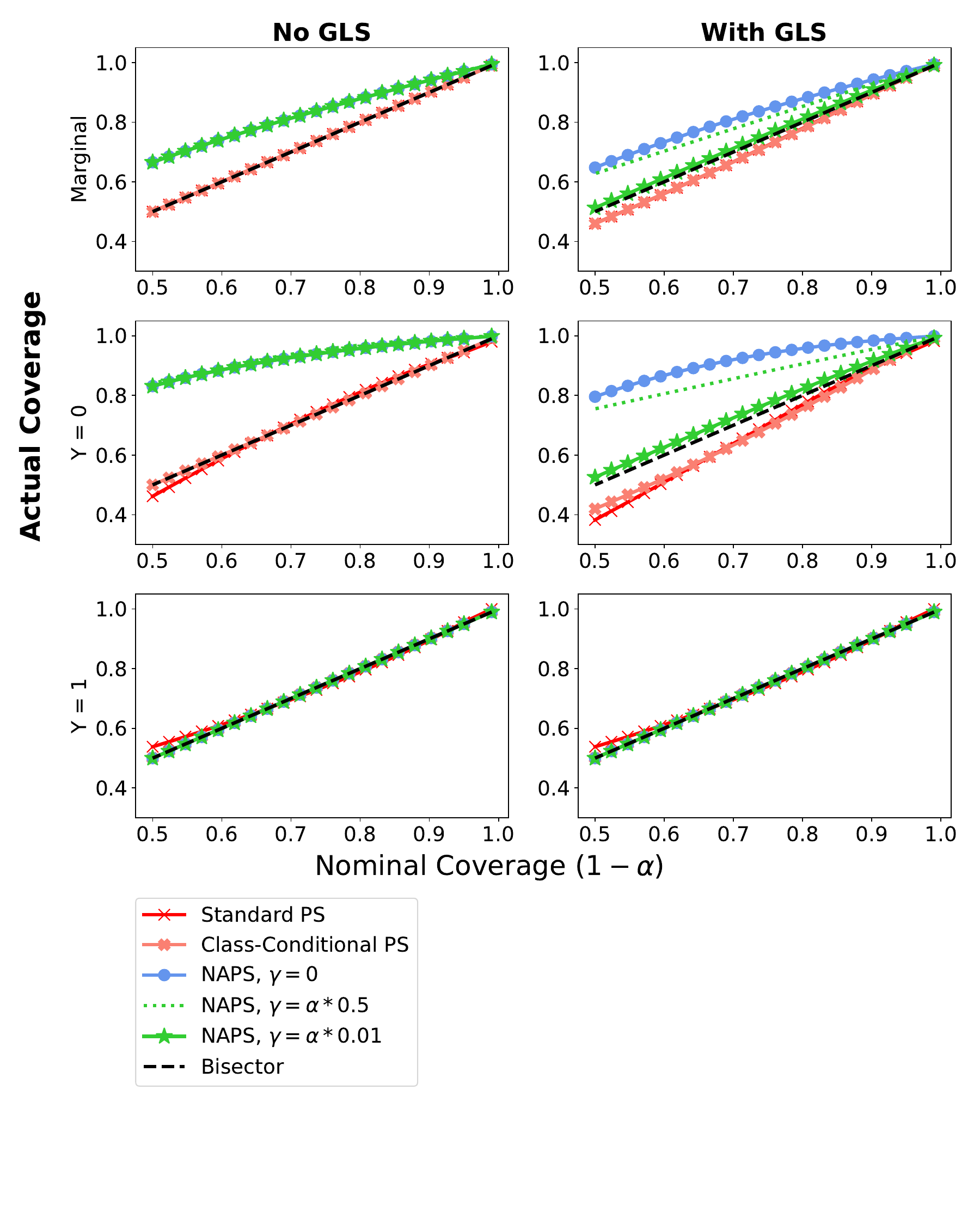}
    \caption{\textbf{Actual vs Nominal Coverage for Several Prediction Set Methods}: We compare the actual coverage of standard prediction sets (red), class-specific prediction sets (pink), and NAPS under different $\gamma$ values under no GLS (left) and with GLS (right). We show marginal coverage (top), and conditional coverage for $Y = 0$ events (middle) and $Y = 1$ events (bottom)}
    \label{fig:synth_covdeep}
\end{figure}

\begin{figure}[t!]
    \centering
    \includegraphics[width=0.8\textwidth]{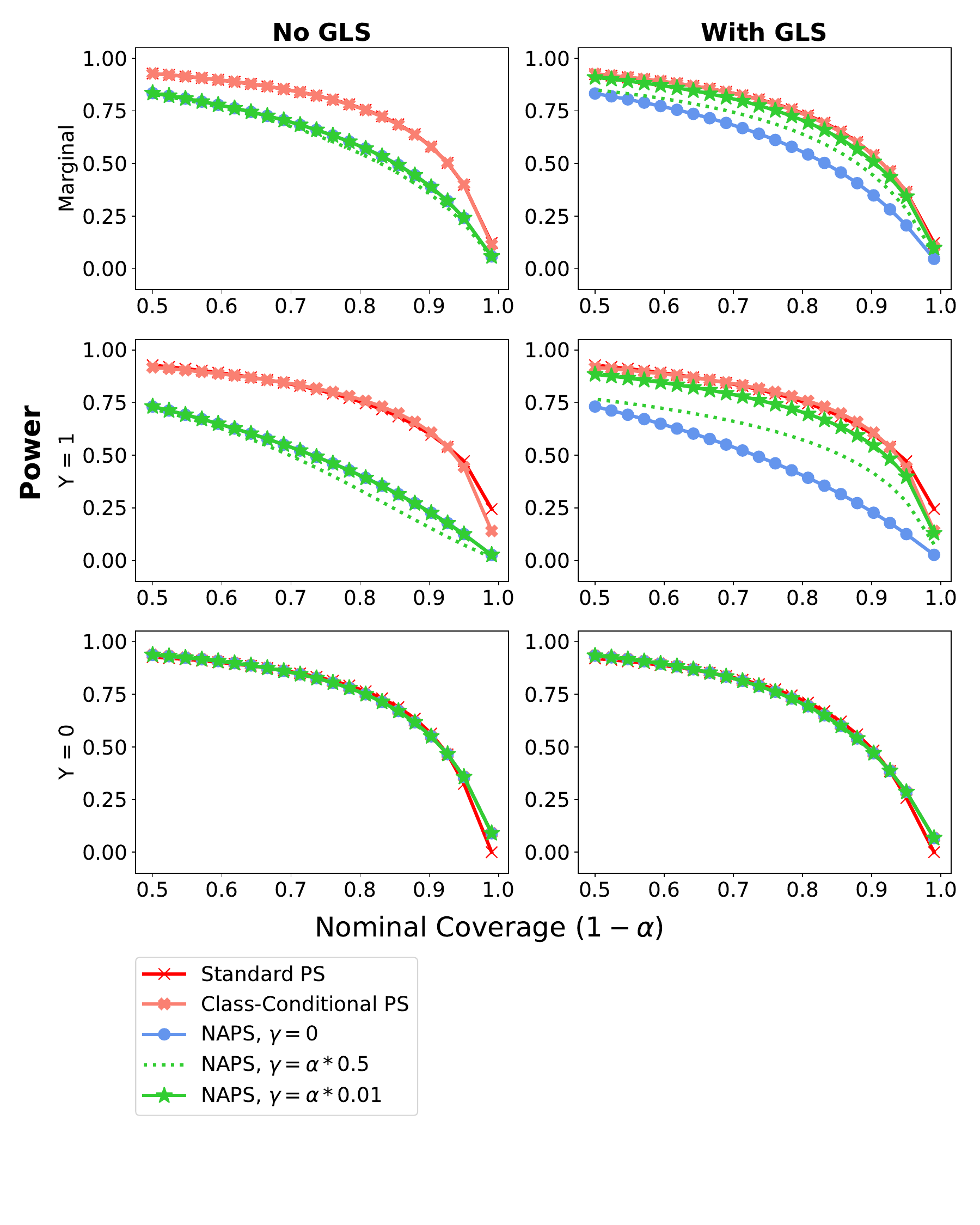}
    \caption{\textbf{Power vs Nominal Coverage for Several Prediction Set Methods}:  We compare the power of standard prediction sets (red), class-specific prediction sets (pink), and NAPS under different $\gamma$ values under no GLS (left) and with GLS (right). Power for $Y = 0$ events (bottom) is defined as $\P[1 \notin \text{Prediction Set} \mid Y = 0]$ and vice versa for $Y = 1$ (middle). Marginal power (top) is the sum of these two power metrics weighted by $\P[Y = 1]$. }
    \label{fig:synth_powerdeep}
\end{figure}

\begin{figure}[t!]
    \centering
    \includegraphics[width=0.8\textwidth]{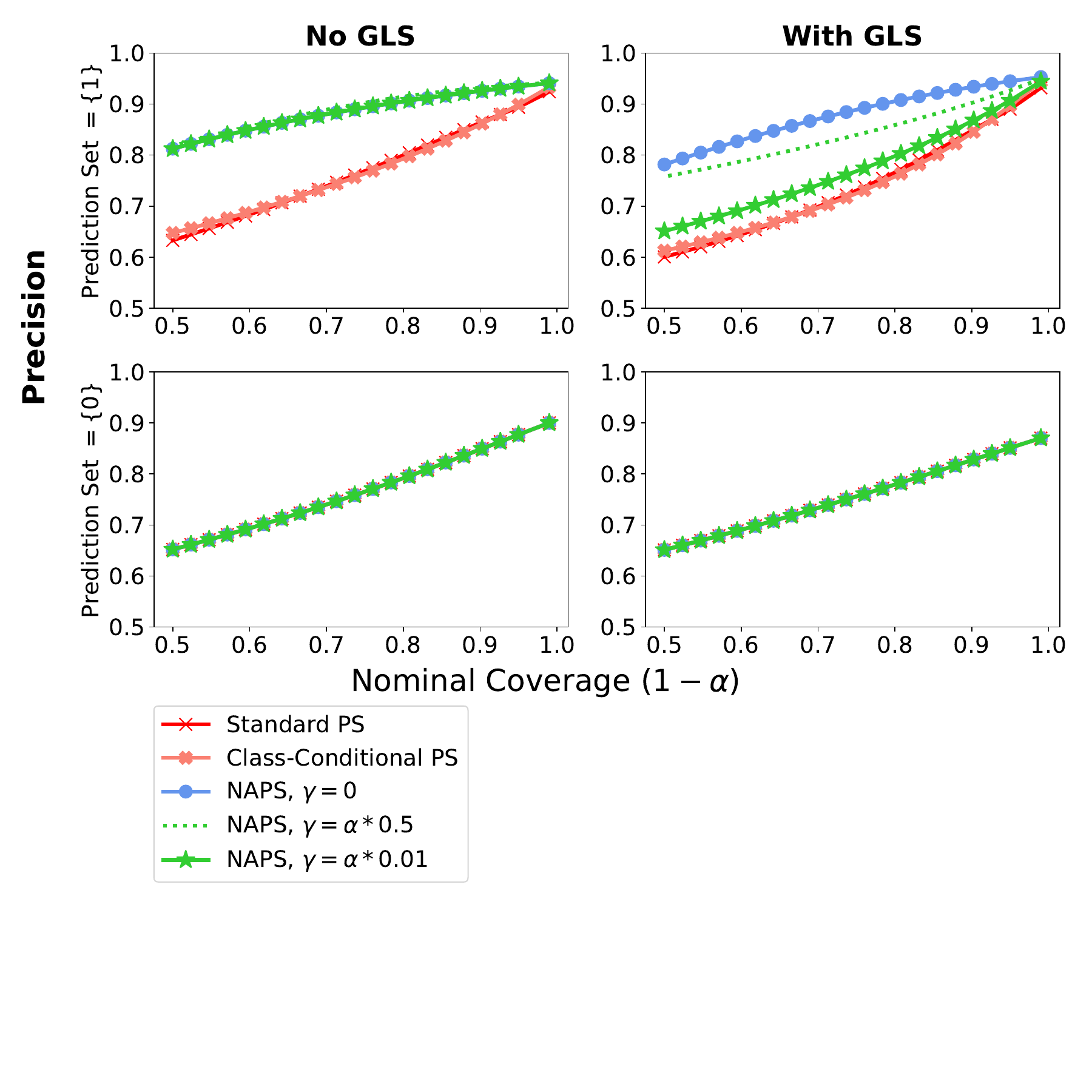}
    \caption{\textbf{Precision vs Nominal Coverage for Several Prediction Set Methods}:  We compare the precision of standard prediction sets (red), class-specific prediction sets (pink), and NAPS under different $\gamma$ values under no GLS (left) and with GLS (right). We define precision for $\text{prediction set} = \{0\}$ as $\P[Y = 0 \mid \text{prediction set} = \{0\}]$ and vice versa for $\text{prediction set} = \{1\}$ outputs. Events where $\text{prediction set} = \{0, 1\}$ or $\text{prediction set} = \emptyset$ are not considered here.}
    \label{fig:synth_precisiondeep}
\end{figure}

\subsection{$\nu$-Conditional Coverage and validity under GLS}

Figure \ref{fig:synth_covextra} below explores coverage of different prediction set methods conditional on $Y$ \emph{and} $\nu$, under the training prior $\nu \sim \text{Unif}[0, 1]$. We compare 4 methods:

\begin{enumerate}
    \item Standard prediction sets that target marginal coverage only
    \item Class-conditional prediction sets that target coverage conditional on $Y$
    \item Class-conditional prediction sets that additionally use the posterior mean  $\hat \nu(x) = \int_\mathcal{N} \nu \: p(\nu \mid x) d\nu$ as an point estimate of $\nu$ to evaluate the posterior. Specifically, $P[Y = 1 \mid X, \nu = \hat\nu(X)]$ is used instead of $P[Y = 1 \mid X]$, where the latter integrates over the prior on $\nu$
    \item NAPS with $\gamma = 0$
\end{enumerate}

Method 3 is added as a possible alternative to forming confidence sets on $\nu$ within the NAPS framework. The figure shows that, although standard and class-conditional prediction sets achieve marginal and class-conditional validity respectively, they do not maintain validity when conditioning on all values of $\nu$. This is the fundamental reason that these methods do not achieve validity under GLS. Whereas, NAPS achieves validity conditional on both $Y$ and $\nu$, resulting in robustness to GLS. We note that method 3 achieves neither marginal nor class-conditional validity, indicating that even well-formed point estimates of $\nu$ are insufficient to reach nominal coverage levels. 

\begin{figure}[t!]
    \centering
    \includegraphics[width=\textwidth]{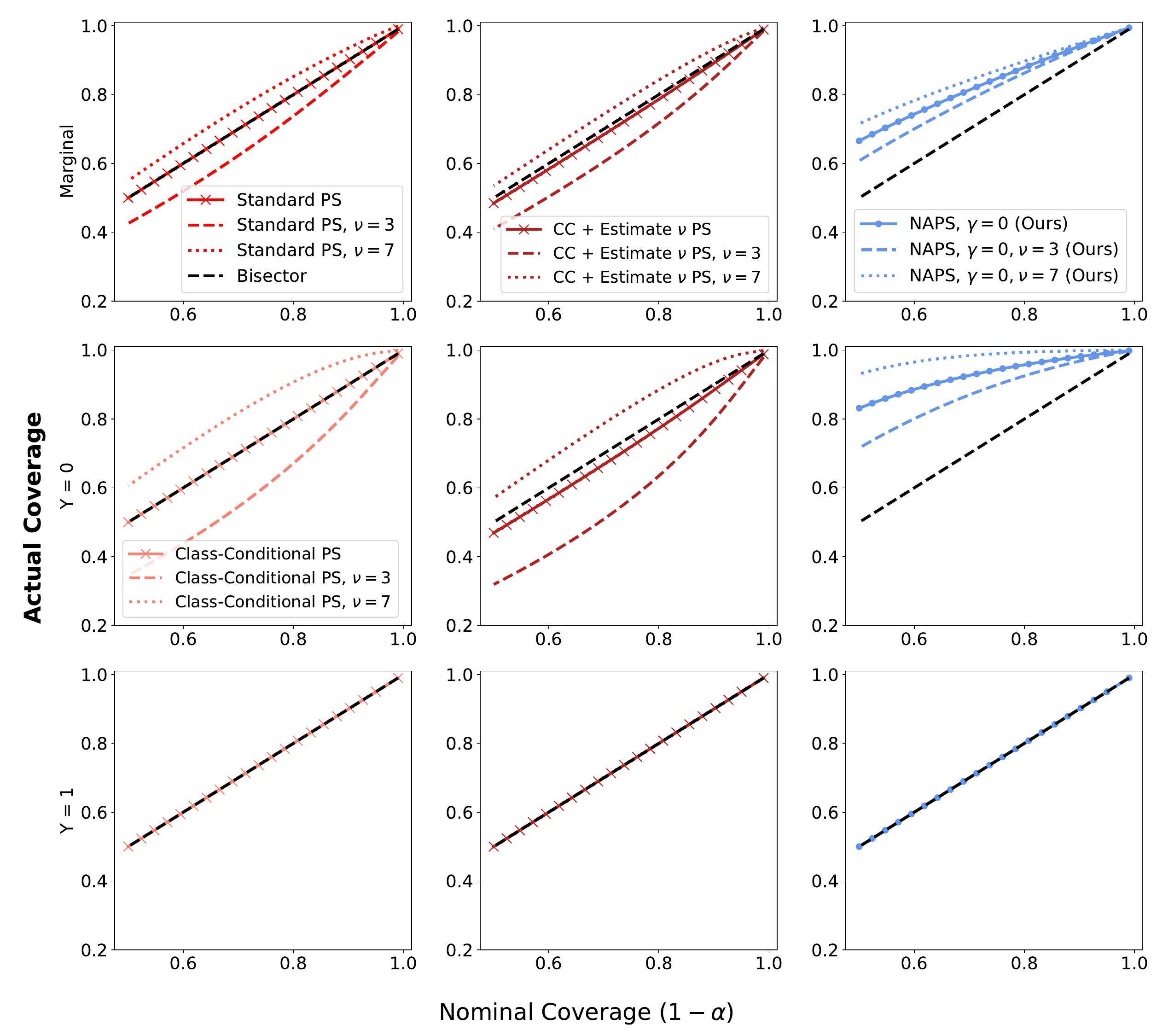}
    \caption{\textbf{Marginal, Class-conditional, and $\nu$-conditional Coverage of Several Prediction Set Methods}: We examine marginal coverage under the training prior on $\nu$ (top), $Y=0$ conditional coverage (middle) and $Y = 1$ conditional coverage (bottom) for standard prediction sets (red, top right only), class-conditional prediction sets (pink, middle left and bottom left), class-conditional prediction sets with estimated $\nu$ (dark red, middle column), and NAPS (blue, right column). In each figure, we also show coverage when additionally conditioning on certain values of $\nu$ (dotted and dashed lines)}
    \label{fig:synth_covextra}
\end{figure}

\subsection{When does $\gamma > 0$ for NAPS increase power?}
\label{sec:increase_power}

The $\gamma$ parameter for NAPS gives us the option to first form a confidence set for $\nu$ on a new observation $x$ before optimizing the cutoffs for our test statistic (see Section \ref{sec:theory}). Because the test statistic is monotonic in the posterior probabilities, we can derive cutoffs on $x$ directly based on the confidence set for $\nu$. Specifically, we can simplify the procedure in Theorem \ref{lemma:NA_cutoff} to the following
\[x_0(\nu; \alpha, \gamma) = x \hspace{4mm} \text{s.t.} \hspace{4mm} \P[X \geq x \mid Y = 0, \nu] = \alpha - \gamma\]

\[x_0^*(\alpha) = \sup_{\nu \in S_0(x; \gamma)}x_0(\nu, \alpha)\]

\[x_1^*(\alpha) = x \hspace{4mm} \text{s.t.} \hspace{4mm} \P[X \leq x \mid Y = 1] = \alpha\]

Where $S_0(x; \gamma)$ is a $1 - \gamma$ confidence set on $\nu$ given $Y = 0$. Then, our prediction set becomes 

\[0 \in \H(x; \alpha) \text{ if } x < x_0^*(\alpha)\]
\[1 \in \H(x; \alpha) \text{ if } x > x_1^*(\alpha)\]

We note that $x_1^*(\alpha)$ does not depend on our choice of $\gamma$, so we focus on $x_0^*(\alpha)$. We also note that lower values of $x_0^*(\alpha)$ result in higher power of the final NAPS. Figure \ref*{fig:synth_nucs} below shows how the choice of $\gamma$ can affect the power of the resulting NAPS.

\begin{figure}[t!]
    \centering
    \includegraphics[width=\textwidth]{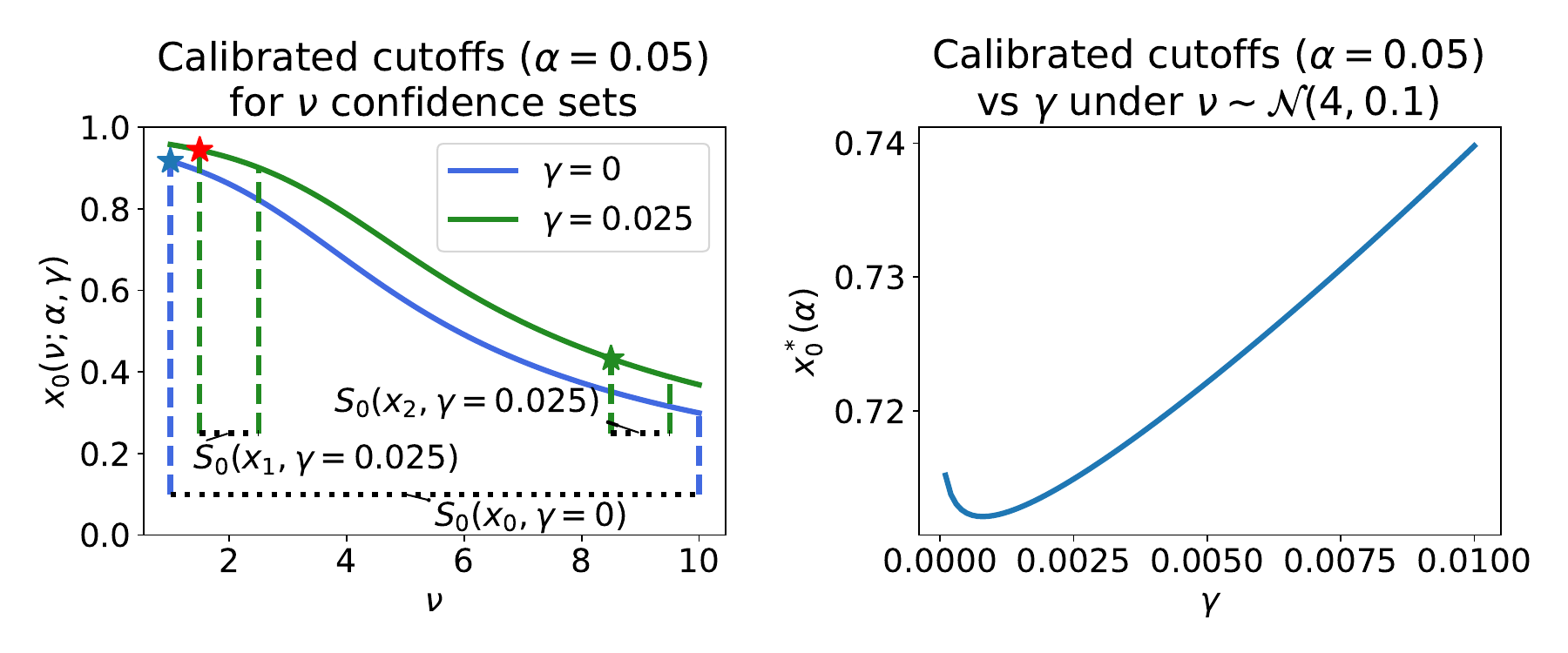}
    \caption{\textbf{Effect of $\gamma$ on NAPS Power} \emph{Left}: We show how the optimization of $x_0(\nu; \alpha, \gamma)$ depends on $\gamma$ and $S_0(x; \gamma)$. The two curves show the relationship between $x_0(\nu; \alpha, \gamma)$ and $\nu$ under two values of $\gamma$. When $\gamma = 0$, we must optimize over the entire space of $\nu$ to derive $x_0^*(\alpha)$ (or equivalently,  $S_0(x; \gamma = 0) = [1, 10]$ for all $x$. This leads to a $x_0^*(\alpha)$ value indicated by the blue star. When $\gamma = 0.0025$, we consider two hypothetical confidence sets $S_0(x_1; \gamma)$ and $S_0(x_2; \gamma)$ for $\nu$, indicated by the two pairs of green dotted lines. In each case, we only optimize $x_0(\nu; \alpha, \gamma)$ over the values of $\nu$ in the confidence set; however, to maintain coverage at $1 - \alpha$, optimization is done over the green curve instead of the blue curve. Optimization over $S_0(x_1; \gamma)$ yields $x_0^*(\alpha)$ indicated by the red star, while optimization over $S_0(x_2; \gamma)$ yields $x_0^*(\alpha)$ indicated by the green star. \emph{Right}: When $S_0(x; \gamma)$ is taken to be the $(\gamma/2, 1-\gamma/2)$ quantiles of the truncated $\mathcal{N}(4, 0.1)$ distribution for all $x$, we can derive a relationship between $x_0^*(\alpha)$ and $\gamma$. In this case, the calibrated cutoff is minimized at $\gamma \approx 0.001$. }
    \label{fig:synth_nucs}
\end{figure}

The left panel demonstrates the tradeoff inherent in selection a value of $\gamma$. Fixing $\nu$ and $\alpha$, $x_0(\nu; \alpha, \gamma)$ is increasing in $\gamma$ (illustrated by the green curve being always higher than the blue curve), so the cutoff at every $\nu$ will always be higher (and power subsequently lower). However, constraining $\nu$ to $S_0(x; \gamma)$ may avoid optimizing over regions of $\nu$ where $x_0(\nu; \alpha, \gamma)$ is relatively high (i.e. small values of $\nu$). In the synthetic example, the most power is gained when $S_0$ constrains $\nu$ to a region where $\nu$ is much larger than 1 (the value of $\nu$ that yields $x_0^*(\alpha)$ when $\gamma = 0$). This is illustrated by the fact that $S_0(x_2; \gamma = 0.0025)$ yields a $x_0^*(\alpha)$ value (green star) much lower than the value obtained when $\gamma = 0$ (blue star). However, setting $\gamma > 0$ can sometimes result in power \emph{loss} if $S_0$ contains small values of $\nu$. This is illustrated by the fact that $S_0(x_1; \gamma = 0.0025)$ yields an even higher $x_0^*(\alpha)$ value (red star) than the case when $\gamma = 0$. The right panel shows that, in our simple synthetic example, there is a relatively clear optimal value for $\gamma$ which is non-zero. 

In general, the distribution of the nuisance parameter(s) and the efficiency of the confidence sets on those NPs will determine which value of $\gamma$ is optimal. If most data points have nuisance parameter values in ``favorable'' regions of the NP space, then it may be worth setting $\gamma > 0$ to form confidence sets. In other cases, letting $\gamma = 0$ may be the optimal choice.

\add{\subsection{Performance of NAPS under SLS}

In the synthetic example, we assumed that the distribution of labels $\P[Y = 1]$ was the same for the training and target data. However, the distribution of $\nu$ is not the same, which leads to $p_{\text{train}}(x \mid Y) \neq p_{\text{target}}(x \mid Y)$, since

\[
    p(x \mid Y = y) = \int p(x \mid Y = y, \nu)\pi(\nu \mid Y = y) \: d \nu
\]

and we explicitly allow for a change in $\pi(\nu \mid Y = y)$ under GLS. This setup is essentially the reverse of the Standard Label Shift (SLS) setup. Under SLS, we would assume that $\P_{\text{train}}[Y = 1] \neq \P_{\text{train}}[Y = 1]$, but that $p_{\text{train}}(x \mid Y) = p_{\text{target}}(x \mid Y)$, which is most directly achieved when the distribution of $\nu$ does not change between the training and target data.

We have shown that class-conditional prediction sets (designed to maintain coverage under SLS) do not maintain coverage under GLS due to the violation of the assumption that $p_{\text{train}}(x \mid Y) = p_{\text{target}}(x \mid Y)$. In this section, we explore how NAPS performs in the SLS setting relative to class-conditional prediction sets. We expect NAPS coverage guarantees to hold, with a decrease in power due to NAPS enforcing nominal coverage at every point in the nuisance parameter space. Figure \ref{fig:naps_sls} shows the results of our experiments under SLS.

\begin{figure}[H]
    \centering
    \includegraphics[width=\textwidth]{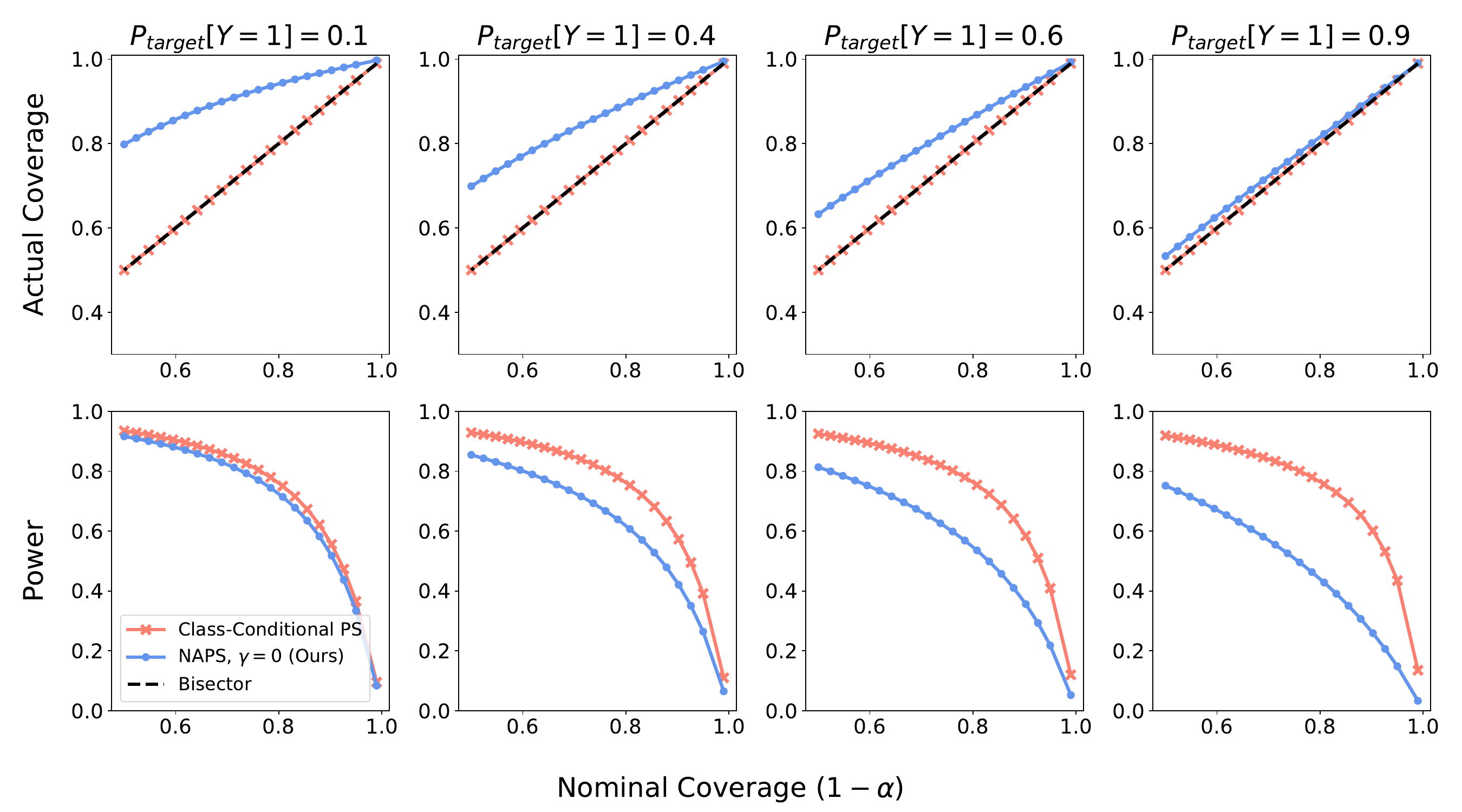}
    \caption{\textbf{Comparison of NAPS and Class-Conditional Prediction Sets under Standard Label Shift}: We plot the test set marginal coverage (top row) and marginal power (bottom row, defined as $\P_{\text{target}}[1-Y \notin \text{prediction set}]$). We compare NAPS (blue) to Class-Conditional PS (pink). This comparison is done for several levels of SLS (columns), where we shift the distribution $Y$ in the evaluation set from $\P_{\text{train}}[Y=1] = 0.5$. \add{The distribution of the nuisance parameter $\nu$ is {\em the same} for training versus target data; that is, we have an SLS setting}.}
    \label{fig:naps_sls}
\end{figure}

In all SLS scenarios we tested, NAPS over-covers and achieves lower levels of power compared to class-conditional prediction sets, demonstrating the theoretical tradeoff described above. Looking at coverage, we see that as $\P_{\text{target}}[Y=1]$ increases, the level of overcoverage for NAPS decreases. This is expected, since the nuisance parameter $\nu$ only affects the distribution of features for $Y=0$ events and causes NAPS to exclude 0 from the prediction set less often. Unsurprisingly, class-conditional prediction sets exactly achieve nominal coverage under every SLS scenario. 

Looking at power, we note that class-conditional prediction sets achieve similar (but not identical) power across all SLS scenarios. Power for NAPS appears to decrease as $\P_{\text{target}}[Y=1]$ increases. This is a consequence of the same fact that $\nu$ only affects $Y=0$ events; because NAPS will exclude 0 from its prediction sets less often, it will suffer a performance loss when there are relatively more $Y=1$ events in the data. In this particular case, NAPS appears to perform best relative to class-conditional prediction sets when $\P_{\text{target}}[Y=1]$ is low, but results may vary in other settings where the relationship between the nuisance parameter(s) and labels may be more complex. However, we do not expect NAPS to outperform class-conditional prediction sets (or any method developed for SLS) under SLS-only scenarios.}

\end{document}